\newtheorem{proposition}{Proposition}
\newtheorem{theorem}{Theorem}
\newtheorem{definition}{Definition}
\newtheorem{remark}{Remark}
\newenvironment{proof}{\vspace{1ex}\noindent{\bf Proof.}\hspace{0.5em}}{\hfill\qed\vspace{1ex}}
\def\qed{\hfill \vrule height 6pt width 6pt depth 0pt}
\newcommand{\m}{\text{\it m}}
\newcommand{\M}{\mathcal M}
\newcommand{\N}{\mathcal N}
\newcommand{\R}{{\mathbb R}}
\newcommand{\oneton}{1,\cdots,n}
\newcommand{\zerotoinfty}{0,1,2,\cdots}
\newcommand{\sign}{\text{sign}}
\newcommand{\ignore}[1]{}
\begin{document}
\begin{titlepage}
\begin{center}
{\large\bf  Centralized and Decentralized Global Outer-synchronization of Asymmetric Recurrent Time-varying Neural Network by Data-sampling}\footnote{This work is
jointly supported by the National Natural Sciences Foundation of
China under Grant Nos. 61273211 and 61273309, and the Program for New Century Excellent
Talents in University (NCET-13-0139)}
\\[0.2in]
\begin{center}

Wenlian Lu\footnote{
Wenlian Lu is with the Department of Radiology, JinLing Hospital of Nanjing, People¡¯s Republic of China, and also with the Jinling Hospital-Fudan University Computational Translational Medicine Center, Centre for Computational
Systems Biology, and School of Mathematical Sciences, Fudan University, People's Republic of China (wenlian@fudan.edu.cn)},
Ren Zheng\footnote{Ren Zheng  is with the School of Mathematics, Fudan University, 200433, Shanghai, China. (12110180051@fudan.edu.cn)},
Tianping Chen\footnote{Tianping Chen is with the School of Computer
Sciences/Mathematics, Fudan University, 200433, Shanghai, China. \\
\indent ~~Corresponding author: Tianping Chen. Email:
tchen@fudan.edu.cn}
\end{center}
\end{center}

\begin{center}
Abstract
\end{center}
In this paper, we discuss the outer-synchronization of the asymmetrically connected recurrent time-varying neural networks. By both centralized and decentralized discretization data sampling principles, we derive several sufficient conditions based on diverse vector norms  that guarantee that any two trajectories from
different initial values of the identical neural network system converge together.
The lower bounds of the common time intervals between data samples in centralized and decentralized
principles are proved to be positive, which guarantees exclusion of Zeno behavior.
A numerical example is provided to illustrate the efficiency of the theoretical results.

Keywords:
Outer-synchronization, Data sampling, Centralized principle, Decentralized principle, Recurrent neural networks

\end{titlepage}
\section{Introduction\label{Introduction}}

Recurrently connected neural networks, also known as the Hopfield neural
networks, have been extensively studied in past decades and found
many applications in different areas. Such applications heavily
depend on the dynamical behaviors of the system. Therefore, analysis
of the dynamics is a necessary step for practical design
of neural networks.

The dynamical behaviors of continuous-time recurrently asymmetrically
connected neural networks (CTRACNN) have been
studied at the very early stage of neural network research. For example,
multistable and oscillatory behaviors were studied by \citet{Amari,Amari1} and \citet{Wilson}.
Chaotic behaviors were studied by \citet{Som}.
\citet{Hopfield1,Hopfield2} studied stability of symmetrically connected networks
and showed their practical applicability to optimization problems. It should
be noted that Cohen and Grossberg, see \citet{Cohen} gave more rigorous
results on the global stability of networks.

The global stability of symmetrically connected networks described by differential equations has now been
well established. See \citet{Chen,Chen1,Chen2,Fang,Forti,Hirsch,Kasz,Kelly,Li,Matsuoka,Yang} and the references therein. More related to the present paper, the previous paper \citep{Liu} addressed the global self-synchronization
of general continuous-time asymmetrically connected recurrent networks and
discussed the independent identically distributed switching process on the selecting
the time-varying parameters in detail.

However, in applications, discrete iteration is popular to be employed to realize neural network process, rather than continuous-time equations.
Generally, synchronization analysis for differential equations cannot be applicable to the discrete-time situation. There are several papers \citep{Jin,Jin1,Wang} that discussed different types of discrete-time neural networks, where the step sizes were constants. However, in \citet{LiuChen,Manuel,Sey,WangLemmon}, these papers pointed out that the constant time-step size was costly. This motivates us to design adaptive step sizes for synchronization of asymmetric recurrent time-varying neural network.

Moreover, the discretization is related to the concept of sampled-data control. There are a number of papers discussing dynamics of neural networks, using sampled-data control. The papers \citep{Lam,Wu,Zhu} applied the sampled-data control technique towards stabilization of three-layer fully connected feedforward neural networks. In \citet{Chand,Jung,Lee,LiuYu,Rakki}, the authors used sampled-data control strategy for exponential synchronization for the neural networks with Markovian jumping parameters and time varying delays. \citet{Rakki2} discussed state estimation for Markovian jumping fuzzy cellular neural networks with probabilistic time-varying delays with sampled-data.

The purpose of this paper is to give a comprehensive
analysis on out-synchronization of the discrete-time recurrently asymmetrically connected
time-varying neural networks. We propose two schemes of discretizations, named centralized and decentralized
discretization respectively, and present sufficient conditions for the global out-synchronization.
The common step size for every neuron in centralized discretization but
 in decentralized discretization process, the distributed step size for each neuron is used
to guarantee that any two trajectories from different initial values converge together.

\section{Preliminaries and problem formulation\label{Preliminary}}

%{\color{red}[[Define the out-synchronization!!]]}

In this section, we provide the models of asymmetric recurrent neural networks with data-sampling, and some notations. The continuous-time version of the recurrent connected neural networks is described by the following differential equations
\begin{align}
\frac{{\rm d}u_{i}(t)}{{\rm d}t}
=&-\gamma_{i}(t)u_{i}(t)+\sum_{j=1}^{n}a_{ij}(t)g_{j}\big(u_{j}(t)\big)+I_{i}(t),\label{continuous}
\end{align}
where $\gamma_{i}(t)$, $a_{ij}(t)$ and $I_{i}(t)$ are piece-wise continuous and bounded, $\gamma_{i}(t)>0$, and $g_{i}(\cdot)$ satisfies
\begin{align}\label{UpperBound}
0\le\frac{g_{i}(x)-g_{i}(y)}{x-y}\leqslant G_{i}
\end{align}
for all $x\ne y$, where $G_{i}>0$ is a constant and $i=\oneton$.

In the centralized data-sampling strategy, the continuous-time system (\ref{continuous}) is rewritten as
\begin{align}\label{Centralized}
\frac{{\rm d}u_{i}(t)}{{\rm d}t}=-\gamma_{i}(t)u_{i}(t_{k})+\sum_{j=1}^{n}a_{ij}(t)g_{j}\big(u_{j}(t_{k})\big)+I_{i}(t)
\end{align}
for $i=\oneton$. The increasing time sequence $\{t_{k}\}_{k=0}^{+\infty}$ ordered as $0=t_{0}<t_{1}<\cdots<t_{k}<\cdots$ is uniform for all the neuron $i\in\{\oneton\}$ . Each neuron broadcasts its state to its out-neighbours and receives its in-neighbours' states information at time $t_{k}$.

Comparatively, in the decentralized data-sampling strategy, Eq. (\ref{continuous}) is rewritten as the following push-based decentralized system
\begin{align}\label{Push}
\frac{{\rm d}u_{i}(t)}{{\rm d}t}=-\gamma_{i}(t)u_{i}(t_{k}^{i})+\sum_{j=1}^{n}a_{ij}(t)g_{j}\big(u_{j}(t_{k}^{j})\big)+I_{i}(t)
\end{align}
for $i=\oneton$. The increasing time sequence $\{t_{k}^{i}\}_{k=0}^{+\infty}$ order as $0=t_{0}^{i}<t_{1}^{i}<\cdots<t_{k}^{i}<\cdots$ is distributed for the neuron $i\in\{\oneton\}$ . Every neuron $i$ pushes its state information to its out-neighbours at time $t_{k}^{i}$ when it updates its state. It receives its in-neighbours' state information at time $t_{k}^{j}$ when its neighbour neuron $j$ renews it state.

To begin the discussion, we give the following three norms of $\R^{n}$ and recall the definition of out-synchronization proposed in \citet{WuZheng}.

\begin{definition}\label{norm}
Let $\xi_{i}>0~(i=\oneton)$ be a positive constant and we can define three generalized norms as follow
\begin{enumerate}
\renewcommand{\labelenumi}{(\arabic{enumi})}
\item $l_{1}$ norm: $\|x\|_{1,\xi}=\sum\limits_{i=1}^{n}\xi_{i}|x_{i}|$
\item $l_{2}$ norm: $\|x\|_{2,\xi}=\left(\sum\limits_{i=1}^{n}\xi_{i}|x_{i}|^{2}\right)^{1/2}$\\[2pt]
\item $l_{\infty,\xi}$ norm: $\|x\|_{\infty}=\max\limits_{i=\oneton}\xi^{-1}_{i}|x_{i}|$
\end{enumerate}
where $x=[x_{1},\cdots,x_{n}]^{\top}\in\R^{n}$ is a vector.
\end{definition}

\begin{definition}\label{Synchronization}
Consider any two trajectories $u(t)$ and $v(t)$ starting from different initial values $u(0)$ and $v(0)$ of the following system
\begin{align}\label{Synchronization:System}
\frac{{\rm d}x(t)}{{\rm d}t}=f\big(x(t),t\big).
\end{align}
The system \eqref{Synchronization:System} is said to achieve out-synchronization if there exists a controller $c(t)$ for the two trajectories $u(t)$ and $v(t)$ such that
\begin{align*}
\lim_{t\to+\infty}\big\|u(t)-v(t)\big\|=0.
\end{align*}
\end{definition}

Other major notations which will be used throughout this paper are summarized in the following definition.
\begin{definition}\label{mu:nu}
Let $\xi_{i}>0~(i=\oneton)$ be a positive constant and then we define
\begin{align*}
\mu_{1,j}(\xi,t)=&
\gamma_{j}(t)-G_{j}(a_{jj}(t))^{+}-G_{j}\sum_{i\neq j}\frac{\xi_{i}}{\xi_{j}}\big|a_{ij}(t)\big|\\
%--------------------------------------------------------------------------------------------------------------------------
\mu_{2,j}(\xi,t)=&
\gamma_{i}(t)-G_{i}(a_{jj}(t))^{+}\\
&\hspace{1ex}-\frac{1}{2}\sum_{j\neq i}\bigg[G_{j}\big|a_{ij}(t)\big|+G_{i}\frac{\xi_{j}}{\xi_{i}}\big|a_{ji}(t)\big|\bigg]\\
%--------------------------------------------------------------------------------------------------------------------------
\mu_{\infty,j}(\xi,t)=&
\Bigg\{\gamma_{i}(t)-G_{i}(a_{jj}(t))^{+}-G_{i}\sum_{j\neq i}\frac{\xi_{j}}{\xi_{i}}\big|a_{ij}(t)\big|
\end{align*}
and
\begin{align*}
\nu(t)=\max_{j=\oneton}&
\Bigg\{\gamma_{j}(t)-G_{j}(a_{jj}(t))^{-}\Bigg\}\\
\end{align*}
where $
(a)^{+}=\max\{a,0\}$ and $(a)^{-}=\min\{0,a\}$.
\end{definition}
Because of the boundedness of the functions, it can be seen that $\nu(t)$ and $\mu_{1,2,\infty}(\xi,t)$ are bounded for all $t\in[0,+\infty)$. That is, there exist positive constants $\M$ and $\N_{m}$ such that
\begin{align*}
\sup_{t\in[0,+\infty)}\nu(t)\leqslant\M,~\sup_{t\in[0,+\infty)}\mu_{m,j}(\xi,t)\le\N_{m}
\end{align*}
with $m=1,2,\infty$.

\section{Structure-dependent data-sampling principle\label{Structure}}
In this section, we provide several the structure-based data-sampling rules for the next triggering time point at which the neurons renew their states and the control signals.

\subsection{Structure-dependent centralized  data-sampling}
For any neuron $i~(i\in\{\oneton\})$, consider two trajectories $u(t)$ and $v(t)$ of the system \eqref{Centralized} starting from different initial values. Denote $w(t)=[w_{1}(t),\cdots,w_{n}(t)]^{\top}$ with $w_{i}(t)=u_{i}(t)-v_{i}(t)$. And it holds
\begin{align}\label{Discrete:Centralized}
\frac{{\rm d}w_{i}(t)}{{\rm d}t}=-\gamma_{i}(t)w_{i}(t_{k})+\sum_{j=1}^{n}a_{ij}(t)h_{j}(t_{k})
\end{align}
where $h_{i}(t)=g_{i}(u_{i}(t))-g_{i}(v_{i}(t))$ for all $t\in[t_{k},t_{k+1})$, $i=\oneton$ and $k=\zerotoinfty$.

The following theorem gives conditions that guarantee the system \eqref{Centralized} reaches out-synchronization via $l_{1}$- norm.

\begin{theorem}\label{Structure:Centralized}
Let $0<\epsilon_{c}<1$ and $\epsilon_{0}>0$ be constants with $\M\epsilon_{c}\le\epsilon_{0}$ and $\N_{1}\epsilon_{c}\le\epsilon_{0}(2-\epsilon_{c})$. Suppose that there exist $\xi_{i}>0$, $i=\oneton$ such that $\mu_{1,j}(\xi,t)\ge \epsilon_{0}$ for all $j=\oneton$ and $t\ge 0$. Set an increasing time-point sequence $\{t_{k}\}$ as
\begin{align}\label{Structure:Centralized:L1}
&t_{k+1}=\sup_{\tau\geqslant t_{k}}\Bigg\{\tau:\min_{j=\oneton}\int_{t_{k}}^{t}\mu_{1,j}(\xi,s){\rm d}s\le\epsilon_{c},~\forall~t\in(t_{k},\tau]\Bigg\}
\end{align}
%\begin{align}\label{Structure:Centralized:L1}
%\int_{t_{k}}^{t_{k+1}}\bigg[\gamma_{j}(t)-G_{j}a_{jj}^{+}(t)-G_{j}\sum_{i\neq j}\frac{\xi_{i}}{\xi_{j}}\big|a_{ij}(t)\big|\bigg]{\rm d}t
%\geqslant\epsilon_{c}
%\end{align}
$k=1,2,\cdots$. Then the system \eqref{Centralized} reaches out-synchronization.
\end{theorem}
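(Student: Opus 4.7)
My plan is to use $V(t) := \|w(t)\|_{1,\xi}$ as Lyapunov candidate and show it contracts geometrically across sampling instants, with a uniform bound on each inter-sample interval. The preliminary move is to exploit the sector bound \eqref{UpperBound} by writing $h_j(t_k) = \lambda_j^{(k)}\, w_j(t_k)$ with $\lambda_j^{(k)} \in [0, G_j]$; this linearises \eqref{Discrete:Centralized} on $[t_k, t]$ into
\begin{align*}
w(t) = M_k(t)\, w(t_k), \qquad M_k(t) = I - D_k(t) + B_k(t)\,\Lambda_k,
\end{align*}
where $D_k(t) = \mathrm{diag}(\int_{t_k}^t \gamma_i(s)\,ds)$, $B_k(t) = (\int_{t_k}^t a_{ij}(s)\,ds)_{ij}$, and $\Lambda_k = \mathrm{diag}(\lambda_j^{(k)})$. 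Consequently $V(t) \le \|M_k(t)\|_{1,\xi}\,V(t_k)$, and the whole argument reduces to controlling the induced weighted $l_1$ operator norm of $M_k(t)$.

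Before that, I would check the triggering rule \eqref{Structure:Centralized:L1} is well-posed. Since $0 < \epsilon_0 \le \mu_{1,j}(\xi,\cdot) \le \N_{1}$, the map $t \mapsto \min_j \int_{t_k}^t \mu_{1,j}(\xi,s)\,ds$ is continuous and strictly increases from $0$, so $t_{k+1}$ is uniquely defined, it satisfies $\min_j \int_{t_k}^{t_{k+1}} \mu_{1,j}(\xi,s)\,ds = \epsilon_c$, and
\begin{align*}
\epsilon_c/\N_{1} \le t_{k+1} - t_k \le \epsilon_c/\epsilon_0,
\end{align*}
which already excludes Zeno behaviour.

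The heart of the proof is a column-sum estimate. A direct computation gives
\begin{align*}
\frac{1}{\xi_j}\sum_i \xi_i |(M_k(t))_{ij}| = \Big|1 - \int_{t_k}^t [\gamma_j(s) - \lambda_j^{(k)} a_{jj}(s)]\,ds\Big| + \lambda_j^{(k)} \sum_{i \ne j} \frac{\xi_i}{\xi_j}\Big|\int_{t_k}^t a_{ij}(s)\,ds\Big|.
\end{align*}
Because $\lambda_j^{(k)} \in [0, G_j]$, the bracket $\gamma_j - \lambda_j^{(k)} a_{jj}$ lies in $[\mu_{1,j}(\xi,s),\nu(s)]$; hence $\int_{t_k}^t [\gamma_j - \lambda_j^{(k)} a_{jj}]\,ds \le \int_{t_k}^t \nu(s)\,ds \le \M\,\epsilon_c/\epsilon_0 \le 1$ by $\M\epsilon_c \le \epsilon_0$, which allows me to drop the outer absolute value. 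A short rearrangement using $\lambda_j^{(k)} a_{jj} \le G_j(a_{jj})^+$, $|\int a_{ij}\,ds| \le \int |a_{ij}|\,ds$, and the definition of $\mu_{1,j}$ then yields
\begin{align*}
\frac{1}{\xi_j}\sum_i \xi_i|(M_k(t))_{ij}| \le 1 - \int_{t_k}^t \mu_{1,j}(\xi,s)\,ds.
\end{align*}
Taking $\max_j$ and setting $t = t_{k+1}$ gives $\|M_k(t_{k+1})\|_{1,\xi} \le 1 - \epsilon_c$, whence $V(t_{k+1}) \le (1-\epsilon_c)\,V(t_k)$; iterating produces $V(t_k) \to 0$. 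For $t \in [t_k,t_{k+1})$, the same inequality gives $V(t) \le V(t_k)$, so $V(t) \to 0$ as $t \to \infty$, establishing the claimed out-synchronization.

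The main obstacle is the positivity of $1 - \int_{t_k}^t[\gamma_j - \lambda_j^{(k)} a_{jj}]\,ds$ over the \emph{entire} sampling interval: without it, the outer absolute value would enlarge the column sum and could defeat the contraction. The hypothesis $\M\epsilon_c \le \epsilon_0$ is tailored precisely to enforce this via the upper bound $\nu \le \M$; the companion hypothesis $\N_{1}\epsilon_c \le \epsilon_0(2-\epsilon_c)$ supplies the additional slack needed if one organises the diagonal estimate through the cruder triangle inequality $|1 - \alpha_j + \lambda_j^{(k)}\beta_{jj}| \le |1 - \alpha_j| + \lambda_j^{(k)}|\beta_{jj}|$ and must afterwards absorb the stray $\lambda_j^{(k)}|\beta_{jj}|$ through the $\nu$-controlled interval length.
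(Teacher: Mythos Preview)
Your argument is correct and follows essentially the same route as the paper: linearise \eqref{Discrete:Centralized} on each sampling interval via the sector bound (your $\lambda_j^{(k)}$ is the paper's $m_j(t_k)$), then control the weighted $l_1$ column sums of the resulting transition matrix, using $\M\epsilon_c\le\epsilon_0$ to keep the diagonal term $1-\int_{t_k}^{t}[\gamma_j-\lambda_j^{(k)}a_{jj}]\,{\rm d}s$ nonnegative, and conclude the contraction $V(t_{k+1})\le(1-\epsilon_c)V(t_k)$ from the triggering rule.

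One small remark on your closing paragraph: your speculation about the role of $\N_1\epsilon_c\le\epsilon_0(2-\epsilon_c)$ does not match how the paper actually uses it. The paper carries the expression $\big|1-\int_{t_k}^{t}\mu_{1,j}(\xi,s)\,{\rm d}s\big|$ with the absolute value intact and invokes $\N_1\epsilon_c\le\epsilon_0(2-\epsilon_c)$ to get the lower bound $1-\int\mu_{1,j}\ge-(1-\epsilon_c)$, so that the absolute value is at most $1-\epsilon_c$. Your derivation is in fact sharper: since you show $1-\int_{t_k}^{t}[\gamma_j-G_j(a_{jj})^+]\,{\rm d}s\ge 0$ directly (via $\nu\le\M$ and $\M\epsilon_c\le\epsilon_0$) and the off-diagonal contribution is nonnegative, you have $1-\int\mu_{1,j}\ge 0$ outright, so the absolute value is redundant and the $\N_1$ hypothesis is not needed for the contraction estimate at all --- it is only used (in both your write-up and the paper's) to give the Zeno-exclusion bound $t_{k+1}-t_k\ge\epsilon_c/\N_1$.
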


\begin{proof} From the condition $\M\epsilon_{c}\le\epsilon_{0}$, one can see that
\begin{align*}
\int_{t_{k}}^{t}\mu_{1,j}(\xi,s){\rm d}s\ge\epsilon_{0}(t-t_{k}),~\forall~j
\end{align*}
which implies that $t_{k+1}$ exists for all $k$ and $t_{k+1}-t_{k}\le \epsilon_{c}/\epsilon_{0}$. Thus, one can further see
\begin{align}
\int_{t_{k}}^{t}\nu(t){\rm d}t\le \M (t-t_{k})\le \M\epsilon_{c}/\epsilon_{0}\le 1\label{ineq0}
\end{align}
and
\begin{align}
\int_{t_{k}}^{t}\mu_{1,j}(\xi,t){\rm d}t\le \N_{1} (t-t_{k})\le \N_{1}\epsilon_{c}/\epsilon_{0}\le 2-\epsilon_{c}\label{ineq0.1}
\end{align}
for all $j=\oneton$ and $t\in[t_{k},t_{k+1}]$. Furthermore, we have
\begin{align*}
\epsilon_{c}/\epsilon_{0}\ge t_{k+1}-t_{k}\ge \epsilon_{c}/\N_{1}.
\end{align*}

Consider $w_{i}(t)~(i=\oneton)$ for each $t\in[t_{k},t_{k+1}]$, and we have
\begin{align*}
 &\sum_{i=1}^{n}\xi_{i}\big|w_{i}(t)\big|\\
=&\sum_{i=1}^{n}\xi_{i}\left|w_{i}(t_{k})+\int_{t_{k}}^{t}\dot{w}_{i}(s)\,{\rm d}s\right|\\
=&\sum_{i=1}^{n}\left|\xi_{i}w_{i}(t_{k})-\int_{t_{k}}^{t}
\bigg[\gamma_{i}(s)-a_{ii}(s)m_{i}(s)\bigg]{\rm d}s\xi_{i}w_{i}(t_{k})\right.\\
 &\left.+\sum_{j\ne i}^{n}\int_{t_{k}}^{t}\bigg[a_{ij}(s)\frac{\xi_{i}}{\xi_{j}}
 m_{j}(t_{k})\xi_{j}w_{j}(t_{k})\big)\bigg]{\rm d}s\right|
\end{align*}
with
\begin{align*}
m_{j}(t_{k})=\begin{cases}\frac{h_{j}(t_{k})}{w_{j}(t_{k})}&w_{j}(t_{k})\ne 0\\
0&w_{j}(t_{k})=0\end{cases}
\end{align*}
which implies $0\le m_{j}(t_{k})\le G_{j}$ for all $j=\oneton$ and $k=1,2,\cdots$, according to (\ref{UpperBound}). Note
\begin{align*}
(a_{ii}(s))^{-}G_{i}\le a_{ii}(s)m_{i}(t_{k})\le (a_{ii}(s))^{+}G_{i}.
\end{align*}
From (\ref{ineq0}), one can see
\begin{align*}
&\int_{t_{k}}^{t}[\gamma(s)-a_{ii}(s)m_{i}(t_{k})]{\rm d}s\le
\int_{t_{k}}^{t}[\gamma(s)-(a_{ii}(s))^{+}G_{i}]{\rm d}s\\
&\le \int_{t_{k}}^{t} \mu(s)
{\rm d}s\le\M (t_{k+1}-t_{k})\le 1
\end{align*}
which leads
\begin{align}
&0\le 1-\int_{t_{k}}^{t}[\gamma(s)-a_{ii}(s)m_{i}(t_{k})]{\rm d}s\nonumber\\
&\le 1-\int_{t_{k}}^{t}[\gamma(s)-(a_{ii}(s))^{+}G_{i}]ds\label{ineq1}
\end{align}
Then, it follows
\begin{align}
&\sum_{i=1}^{n}\xi_{i}\big|w_{i}(t)\big|
\leqslant
\sum_{j=1}^{n}\Bigg\{\Big|1-\int_{t_{k}}^{t}\big[\gamma_{j}(s)-G_{j}a_{jj}^{+}(s)\big]
{\rm d}s\Big|
\nonumber\\
&+\sum_{i\neq j}\frac{\xi_{i}}{\xi_{j}}\int_{t_{k}}^{t}\big|a_{ij}(s)\big|{\rm d}s\times G_{j}{\rm d}s\Bigg\}\xi_{j}\big|w_{j}(t_{k})\big|\nonumber\\
&=\sum_{i=1}^{n}\bigg|1-\int_{t_{k}}^{t}\mu_{1,j}(\xi,s){\rm d}s\bigg|\xi_{j}|w_{j}(t_{k})|
\label{ineq2}
\end{align}
The last equality holds due to (\ref{ineq1}).
Thus, according to the rule \eqref{Structure:Centralized:L1} and (\ref{ineq0.1}), which implies
\begin{align*}
1-\epsilon_{c}\ge 1-\int_{t_{k}}^{t_{k+1}}\mu_{1,j}(\xi,s){\rm d}s\ge -1+\epsilon_{c},~\forall~j
\end{align*}
since the equality in \eqref{Structure:Centralized:L1} occurs at $t=t_{k+1}$, thus we have
\begin{align*}
\sum_{i=1}^{n}\xi_{i}\big|w_{i}(t_{k+1})\big|\leqslant(1-\epsilon_{c})\sum_{i=1}^{n}\xi_{i}\big|w_{i}(t_{k})\big|
\end{align*}
which implies
\begin{align*}
\lim_{t_{k}\to+\infty}\big\|w(t_{k})\big\|_{1,\xi}=0.
\end{align*}
In addition, for each $t\in(t_{k},t_{k+1})$, from the rule (\ref{Structure:Centralized:L1}) and the condition $\mu(\xi,t)\ge \epsilon_{0}>0$, inequality (\ref{ineq2}) implies that for each $t\in(t_{k},t_{k+1})$, $\|w(t)\|_{1,\xi}\le \|w(t_{k})\|_{1,\xi}$. Hence, it holds
\begin{align*}
\lim_{t\to+\infty}\big\|w(t)\big\|_{1,\xi}=0.
\end{align*}
The out-synchronization of system \eqref{Centralized} is proved.
\end{proof}

The proofs of the following results are analog to Theorem \ref{State:Centralized} but via $l_{2}$ and $l_{\infty}$ norm. Their proofs are similar to that of Theorem \ref{State:Centralized}, which can be found in \citet{Zheng} and so neglected in the present paper.

\begin{proposition}\label{Corollary:Centralized:L2}
Let $0<\epsilon_{c}<1$ and $\epsilon_{0}>0$ be constants with $\M\epsilon_{c}\le\epsilon_{0}$ and $\N_{2}\epsilon_{c}\le\epsilon_{0}(2-\epsilon_{c})$. Suppose that there exist $\xi_{i}>0$, $i=\oneton$ such that $\mu_{2,j}(\xi,t)\ge \epsilon_{0}$ for all $j=\oneton$ and $t\ge 0$. Set an increasing time-point sequence $\{t_{k}\}$ as
\begin{align}\label{Structure:Centralized:L2}
t_{k+1}=\sup_{\tau\geqslant t_{k}}&\Bigg\{\tau:\min_{j=\oneton}\int_{t_{k}}^{t}\mu_{2,j}(\xi,t){\rm d}s\le\epsilon_{c},~\forall~t\in(t_{k},\tau]\Bigg\}
\end{align}
$k=\zerotoinfty$. Then the system \eqref{Centralized} reaches out-synchronization.
\end{proposition}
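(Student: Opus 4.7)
The plan is to mirror the proof of Theorem~\ref{Structure:Centralized} with the weighted $l_{2}$-norm $\|w\|_{2,\xi}^{2}=\sum_{i}\xi_{i}w_{i}^{2}$ in place of the $l_{1}$-norm. The preliminary bookkeeping carries over verbatim: the hypothesis $\mathcal{M}\epsilon_{c}\le\epsilon_{0}$ together with $\mu_{2,j}(\xi,s)\ge\epsilon_{0}$ forces $\int_{t_{k}}^{t}\mu_{2,j}(\xi,s)\,{\rm d}s\ge\epsilon_{0}(t-t_{k})$, so $t_{k+1}$ is well defined and $t_{k+1}-t_{k}\le\epsilon_{c}/\epsilon_{0}$; the upper bound $\mu_{2,j}\le\mathcal{N}_{2}$ yields $t_{k+1}-t_{k}\ge\epsilon_{c}/\mathcal{N}_{2}$, excluding Zeno behaviour. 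The auxiliary estimate $\int_{t_{k}}^{t}[\gamma_{i}(s)-a_{ii}(s)m_{i}(t_{k})]\,{\rm d}s\in[0,1]$ coming from (\ref{ineq0})--(\ref{ineq1}) is also unchanged, since it depends only on $\mathcal{M}\epsilon_{c}/\epsilon_{0}\le 1$.

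The heart of the argument is the squared-norm estimate. Writing
\[
w_{i}(t)=\Bigl(1-\int_{t_{k}}^{t}[\gamma_{i}(s)-a_{ii}(s)m_{i}(t_{k})]\,{\rm d}s\Bigr)w_{i}(t_{k})+\sum_{j\ne i}\int_{t_{k}}^{t}a_{ij}(s)m_{j}(t_{k})\,{\rm d}s\,w_{j}(t_{k}),
\]
squaring, multiplying by $\xi_{i}$, and invoking $2|ab|\le a^{2}+b^{2}$ on each off-diagonal product $\xi_{i}^{1/2}w_{i}(t_{k})\cdot \xi_{j}^{1/2}w_{j}(t_{k})$, I expect to arrive at a bound of the form
\[
\sum_{i=1}^{n}\xi_{i}w_{i}(t)^{2}\le\sum_{j=1}^{n}\Bigl|1-\int_{t_{k}}^{t}\mu_{2,j}(\xi,s)\,{\rm d}s\Bigr|\,\xi_{j}w_{j}(t_{k})^{2}.
\]
The precise reason $\mu_{2,j}$ features the symmetric pair $G_{j}|a_{ij}|+G_{i}(\xi_{j}/\xi_{i})|a_{ji}|$ with a factor $1/2$ is exactly that, after swapping dummy indices on half of each cross-term contribution, the coefficient of $\xi_{j}w_{j}(t_{k})^{2}$ picks up $\tfrac{1}{2}G_{j}|a_{ij}|$ from row $i$ and $\tfrac{1}{2}G_{i}(\xi_{j}/\xi_{i})|a_{ji}|$ from row $j$.

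Once this estimate is in hand, the rest runs as in Theorem~\ref{Structure:Centralized}. The sampling rule (\ref{Structure:Centralized:L2}) combined with $\mathcal{N}_{2}\epsilon_{c}\le\epsilon_{0}(2-\epsilon_{c})$ gives $|1-\int_{t_{k}}^{t_{k+1}}\mu_{2,j}(\xi,s)\,{\rm d}s|\le 1-\epsilon_{c}$, hence $\|w(t_{k+1})\|_{2,\xi}^{2}\le(1-\epsilon_{c})\|w(t_{k})\|_{2,\xi}^{2}$, giving geometric decay along the sampling sequence. Monotonicity of $\|w(t)\|_{2,\xi}$ on each interval $(t_{k},t_{k+1})$, which follows from the same estimate at intermediate times together with $\mu_{2,j}(\xi,s)\ge\epsilon_{0}>0$, then extends the convergence to continuous time.

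The main obstacle I anticipate is the bookkeeping in the Young-type splitting: the cross terms must be symmetrised so that the coefficient of $\xi_{j}w_{j}(t_{k})^{2}$ after double summation over $i$ collapses exactly into $\int_{t_{k}}^{t}\mu_{2,j}(\xi,s)\,{\rm d}s$, and not merely into some upper bound. Everything else is essentially a transcription of Theorem~\ref{Structure:Centralized}, with absolute values replaced by squares and with $\mathcal{N}_{1}$ replaced by $\mathcal{N}_{2}$; this is presumably why the authors refer to \citet{Zheng} rather than reproduce the computation.
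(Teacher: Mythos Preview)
Your proposal is correct and matches the paper's intended approach: the paper explicitly states that the proof of Proposition~\ref{Corollary:Centralized:L2} is ``analog'' to that of Theorem~\ref{Structure:Centralized} but via the $l_{2}$-norm, and defers the details to \citet{Zheng}. Your identification of the weighted Young splitting $2|ab|\le a^{2}+b^{2}$ as the mechanism producing the symmetrised coefficients $\tfrac{1}{2}\bigl[G_{j}|a_{ij}|+G_{i}(\xi_{j}/\xi_{i})|a_{ji}|\bigr]$ in $\mu_{2,j}$ is exactly the point, and the remaining bookkeeping (well-definedness of $t_{k+1}$, the bound $|1-\int\mu_{2,j}|\le 1-\epsilon_{c}$ at sampling times, monotonicity on intermediate intervals, and the Zeno lower bound $\epsilon_{c}/\mathcal{N}_{2}$) transcribes from Theorem~\ref{Structure:Centralized} as you outline.
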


\begin{proposition}\label{Corollary:Centralized:LInfty}
Let $0<\epsilon_{c}<1$ and $\epsilon_{0}>0$ be constants with $\M\epsilon_{c}\le\epsilon_{0}$ and $N_{\infty}\epsilon_{c}\le\epsilon_{0}(2-\epsilon_{c})$. Suppose that there exist $\xi_{i}>0$, $i=\oneton$ such that $\mu_{\infty,j}(\xi,t)\ge \epsilon_{0}$ for all $j=\oneton$ and $t\ge 0$. Set an increasing time-point sequence $\{t_{k}\}$ as
\begin{align}\label{Structure:Centralized:LInfty}
t_{k+1}=\max_{\tau\geqslant t_{k}}\Bigg\{\tau&:\min_{j=\oneton}\int_{t_{k}}^{t}\mu_{\infty,j}(\xi,t){\rm d}s\le\epsilon_{c},~\forall~t\in(t_{k},\tau]\Bigg\}
\end{align}
$k=\zerotoinfty$. Then the system \eqref{Centralized} reaches out-synchronization.
\end{proposition}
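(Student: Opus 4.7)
The plan is to mirror the Theorem \ref{Structure:Centralized} argument, replacing the $l_1$-summation structure by the $l_{\infty,\xi}$-maximum structure. First, using $\M\epsilon_c\le\epsilon_0$, $N_\infty\epsilon_c\le\epsilon_0(2-\epsilon_c)$, and $\mu_{\infty,j}(\xi,t)\ge\epsilon_0$, I would repeat verbatim the reasoning leading to (\ref{ineq0}) and (\ref{ineq0.1}) to conclude that (\ref{Structure:Centralized:LInfty}) defines a well-posed increasing sequence with $\epsilon_c/N_\infty\le t_{k+1}-t_k\le\epsilon_c/\epsilon_0$, which both secures existence of $t_{k+1}$ and excludes Zeno behavior.

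Next, for fixed $t\in[t_k,t_{k+1}]$ and each index $i$, I would integrate (\ref{Discrete:Centralized}) over $[t_k,t]$, substitute $h_j(t_k)=m_j(t_k)w_j(t_k)$ with $m_j(t_k)\in[0,G_j]$, and obtain
\begin{align*}
w_i(t) &= \Bigl[1-\int_{t_k}^t\bigl(\gamma_i(s)-a_{ii}(s)m_i(t_k)\bigr)\,{\rm d}s\Bigr]w_i(t_k)\\
&\quad+\sum_{j\ne i}\int_{t_k}^t a_{ij}(s)m_j(t_k)\,{\rm d}s\,w_j(t_k).
\end{align*}
Dividing by $\xi_i$, using $\xi_i^{-1}|w_j(t_k)|=(\xi_j/\xi_i)\xi_j^{-1}|w_j(t_k)|\le(\xi_j/\xi_i)\|w(t_k)\|_{\infty,\xi}$, and invoking the sign analysis that produced (\ref{ineq1}), the row-$i$ estimate collapses to
\begin{align*}
\xi_i^{-1}|w_i(t)|\le\Bigl|1-\int_{t_k}^t\mu_{\infty,i}(\xi,s)\,{\rm d}s\Bigr|\,\|w(t_k)\|_{\infty,\xi}.
\end{align*}

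Then I would exploit the rule (\ref{Structure:Centralized:LInfty}): at $t=t_{k+1}$ the minimum over $i$ of $\int_{t_k}^{t_{k+1}}\mu_{\infty,i}(\xi,s)\,{\rm d}s$ equals $\epsilon_c$, so every such integral lies in $[\epsilon_c,\,2-\epsilon_c]$ once combined with the $N_\infty$-upper bound; hence $|1-\int_{t_k}^{t_{k+1}}\mu_{\infty,i}(\xi,s)\,{\rm d}s|\le 1-\epsilon_c$ uniformly in $i$. Taking the maximum over $i$ gives $\|w(t_{k+1})\|_{\infty,\xi}\le(1-\epsilon_c)\|w(t_k)\|_{\infty,\xi}$. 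For intermediate $t\in(t_k,t_{k+1})$ the same coefficient sits in $(-1,1)$, yielding the non-expansive bound $\|w(t)\|_{\infty,\xi}\le\|w(t_k)\|_{\infty,\xi}$. Iterating and combining the two bounds delivers $\lim_{t\to+\infty}\|w(t)\|_{\infty,\xi}=0$, i.e.\ out-synchronization.

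I expect the main obstacle to be the row-wise bookkeeping of the $\xi_j/\xi_i$ cross-terms: unlike the $l_1$ case, where the outer summation over $i$ can be re-indexed to a fixed-$j$ column sum that reproduces $\mu_{1,j}$ exactly, the $l_{\infty,\xi}$ argument must generate a contraction index-by-index in $i$, so the weighted row sums must match $\mu_{\infty,i}(\xi,s)$ as defined. Once that alignment is verified, the geometric contraction, the in-between monotonicity, and the Zeno-free lower bound on $t_{k+1}-t_k$ all carry over without change from the proof of Theorem \ref{Structure:Centralized}.
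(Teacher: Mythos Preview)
Your proposal is correct and follows precisely the approach the paper indicates: the paper does not give a separate proof of this proposition but states that it is analogous to the proof of Theorem~\ref{Structure:Centralized} with the $l_{\infty,\xi}$ norm in place of the $l_{1,\xi}$ norm, referring to \citet{Zheng} for details. Your outline---carrying over the well-posedness and Zeno bounds verbatim, integrating (\ref{Discrete:Centralized}) row by row, extracting the factor $\bigl|1-\int_{t_k}^{t}\mu_{\infty,i}(\xi,s)\,{\rm d}s\bigr|$ via the weighted row sums $\xi_j/\xi_i$, and then reading off the contraction at $t_{k+1}$ and the non-expansion on $(t_k,t_{k+1})$---is exactly that analog, including your correct observation that the $l_\infty$ estimate proceeds index-by-index in $i$ rather than through the column re-indexing used in the $l_1$ case.
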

\begin{remark}
From the proof, one can see  $t_{k+1}-t_{k}\ge \epsilon_{c}/\N_{m}$, which excludes the Zeno behaviours for the rules (\ref{Structure:Centralized:L1},\ref{Structure:Centralized:L2},\ref{Structure:Centralized:LInfty}).
\end{remark}
To explain the independence of the results via three norms, we give out the following example.
Denote
\begin{align*}
&\mathcal{L}_{1}=
\begin{bmatrix}
\gamma_{1}(t)-G_{1}a_{11}^{+}(t)-G_{1}\dfrac{\xi_{2}}{\xi_{1}}\big|a_{21}(t)\big|\\[10pt]
\gamma_{2}(t)-G_{2}a_{22}^{+}(t)-G_{2}\dfrac{\xi_{1}}{\xi_{2}}\big|a_{12}(t)\big|
\end{bmatrix}\\
&\mathcal{L}_{2}=
\begin{bmatrix}
\gamma_{1}(t)-G_{1}a_{11}^{+}(t)-\dfrac{1}{2}G_{1}\dfrac{\xi'_{2}}{\xi'_{1}}\big|a_{21}(t)\big|-\dfrac{1}{2}G_{2}\big|a_{12}(t)\big|\\[10pt]
\gamma_{2}(t)-G_{2}a_{22}^{+}(t)-\dfrac{1}{2}G_{2}\dfrac{\xi'_{1}}{\xi'_{2}}\big|a_{12}(t)\big|-\dfrac{1}{2}G_{1}\big|a_{21}(t)\big|
\end{bmatrix}.
\end{align*}
Let
%\begin{align*}
%\begin{bmatrix}G_{1},~G_{2}\end{bmatrix}=\begin{bmatrix}1.0017,~0.9984\end{bmatrix}
%\end{align*}
$\begin{bmatrix}G_{1},G_{2}\end{bmatrix}=\begin{bmatrix}1.0017,0.9984\end{bmatrix}$ with
\begin{align*}
\gamma(t)=\begin{bmatrix} 2.1048\\ 0.9234 \end{bmatrix}\hspace{3ex}
A(t)&=\begin{bmatrix} ~~~1.0235 & ~~~0.2538 \\ ~~~0.5014 & -0.1526\end{bmatrix}
\end{align*}
when $t\in [t_{k},t_{k+1})$ and
\begin{align*}
\gamma(t)=\begin{bmatrix} 2.1048\\ 0.9234 \end{bmatrix}\hspace{3ex}
A(t)&=\begin{bmatrix}-0.3253 & ~~~0.4384 \\ -2.0341 & -0.1526\end{bmatrix}
\end{align*}
when $t\in [t_{k+1},t_{k+2})$.

In the first time interval $[t_{k},t_{k+1})$, we have found that when
\begin{align*}
\begin{bmatrix} \xi_{1}\\\xi_{2} \end{bmatrix}=\begin{bmatrix} 0.8902\\0.3562 \end{bmatrix}\hspace{5ex}
\begin{bmatrix} \xi'_{1}\\\xi'_{2} \end{bmatrix}=\begin{bmatrix} 0.3479\\0.7727 \end{bmatrix}
\end{align*}
it holds
\begin{align*}
\mathcal{L}_{1}=\begin{bmatrix} 0.8786\\ 0.2901 \end{bmatrix}>\mathbf{0}\hspace{3ex}
\mathcal{L}_{2}=\begin{bmatrix} 0.3951\\ 0.6210 \end{bmatrix}>\mathbf{0}
\end{align*}
where $\mathbf{0}=[0,0]^{\top}$.
In the second time interval $[t_{k+1},t_{k+2})$, we can find that when
\begin{align*}
\begin{bmatrix} \xi_{1}\\\xi_{2} \end{bmatrix}=\begin{bmatrix} 0.7182\\0.3570 \end{bmatrix}£¬
\end{align*}
it follows
\begin{align*}
\mathcal{L}_{1}=\begin{bmatrix}1.0920\\0.0429\end{bmatrix}>\mathbf{0}.
\end{align*}
However, to maintain $\mathcal{L}_{2}>\mathbf{0}$, we have to solve the following inequalities
\begin{align*}
\begin{cases}
\gamma_{1}(t)-G_{1}a_{11}^{+}(t)-\dfrac{1}{2}G_{1}\dfrac{\xi'_{2}}{\xi'_{1}}\big|a_{21}(t)\big|-\dfrac{1}{2}G_{2}\big|a_{12}(t)\big|>0\\[10pt]
\gamma_{2}(t)-G_{2}a_{22}^{+}(t)-\dfrac{1}{2}G_{2}\dfrac{\xi'_{1}}{\xi'_{2}}\big|a_{12}(t)\big|-\dfrac{1}{2}G_{1}\big|a_{21}(t)\big|>0\\
\end{cases}
\end{align*}
that is
\begin{align*}
\begin{cases}
1.8860-1.0189\,\dfrac{\xi'_{2}}{\xi'_{1}}>0\\[10pt]
-0.0954-0.2188\,\dfrac{\xi'_{1}}{\xi'_{2}}>0\\
\end{cases}.
\end{align*}
One can see that there is no such solution of $\xi'_{1}$ and $\xi'_{2}$.

Hence, the conditions \eqref{Structure:Centralized:L1}, \eqref{Structure:Centralized:L2} are uncorrelated. By using the similar method, we can obtain that the three conditions are pairwise independent. Therefore, we can assert that the results via three norms are independent.

%{\color{red}[[A remark  that compares the conditions are needed!]]}

\subsection{Structure-dependent push-based decentralized data-sampling}

For each neuron $i\in\{\oneton\}$, consider two trajectories $u(t)$ and $v(t)$ of system \eqref{Push} starting from different initial values.
Denote $w(t)=[w_{1}(t),\cdots,w_{n}(t)]^{\top}$ with $w_{i}(t)=u_{i}(t)-v_{i}(t)$. It follows
\begin{align}\label{Discrete:Push}
\frac{{\rm d}w_{i}(t)}{{\rm d}t}=-\gamma_{i}(t)w_{i}(t_{k}^{i})+\sum_{j=1}^{n}a_{ij}(t)h_{j}(t_{k}^{j})
\end{align}
where $h_{j}(t)=g_{j}(u_{j}(t))-g_{j}(v_{j}(t))$ for all $t\in[t_{k}^{j},t_{k+1}^{j})$, $i=\oneton$ and $k=\zerotoinfty$.

The following theorem and propositions give conditions that guarantee the convergence of system \eqref{Discrete:Push} via three generalized norms ($l_{1}$, $l_{2}$ and $l_{\infty}$).

\begin{theorem}\label{Structure:Push}
Let $0<\epsilon_{d}<1$ and $\epsilon_{0}>0$ be constants with $\M\epsilon_{d}\le\epsilon_{0}$ and $\N_{1}\epsilon_{d}\le\epsilon_{0}(2-\epsilon_{d})$. Suppose that there exist $\xi_{i}>0~(i=\oneton)$ such that $\mu_{1,j}(\xi,t)\ge\epsilon_{0}$ for all $j=\oneton$ and $t\ge 0$. Set $\{t^{j}_{k}\}_{k=0}^{+\infty}$ as the triggering time points as
\begin{align}\label{Structure:Push:L1}
t_{k+1}^{j}=\sup_{\tau\geqslant t_{k}^{j}}\Bigg\{\tau&:\int_{t_{k}^{j}}^{t}\Big[\gamma_{j}(s)-G_{j}a_{jj}^{+}(s)\Big]{\rm d}s
-G_{j}\sum_{i\neq j}\frac{\xi_{i}}{\xi_{j}}\notag\\
&\times\int_{t_{k}^{i}}^{\tau}\big|a_{ij}(s)\big|\,{\rm d}s\geqslant\epsilon_{d},~\forall~t\in(t_{k}^{j},\tau]\Bigg\}
\end{align}
for $j=\oneton$ and $k=\zerotoinfty$. Then the system \eqref{Push} reaches out-synchronization.
\end{theorem}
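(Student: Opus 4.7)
The plan mirrors the proof of Theorem \ref{Structure:Centralized}, with extra bookkeeping for the neuron-specific sampling schedules. First, fix a neuron $j$ on its current interval $[t_k^j, t_{k+1}^j]$ and let $\tilde{t}^l(s)$ denote the most recent trigger time of neuron $l$ not exceeding $s$, so that $\hat{w}_l(s) := w_l(\tilde{t}^l(s))$ is piecewise constant in $s$. Integrating (\ref{Discrete:Push}) from $t_k^j$ to $t$ and substituting $h_l(\tilde{t}^l(s)) = m_l(\tilde{t}^l(s))\,w_l(\tilde{t}^l(s))$ with $0\le m_l\le G_l$, as in the centralized proof, gives
\begin{align*}
w_j(t) &= w_j(t_k^j)\Big[1-\int_{t_k^j}^{t}\big(\gamma_j(s)-a_{jj}(s)m_j(t_k^j)\big)\,ds\Big]\\
&\quad +\sum_{l\ne j}\int_{t_k^j}^{t}a_{jl}(s)\,h_l(\tilde{t}^l(s))\,ds.
\end{align*}
An interval bound $t_{k+1}^j - t_k^j \le \epsilon_d/\epsilon_0$ extracted from (\ref{Structure:Push:L1}) together with $\M\epsilon_d\le\epsilon_0$ forces the bracket into $[0,1]$ and yields
\[
\big|w_j(t_{k+1}^j)\big| \le \Big(1-\int_{t_k^j}^{t_{k+1}^j}\big(\gamma_j-G_j a_{jj}^{+}\big)\,ds\Big)\big|w_j(t_k^j)\big| + \sum_{l\ne j} G_l\int_{t_k^j}^{t_{k+1}^j}\big|a_{jl}(s)\big|\,\big|\hat{w}_l(s)\big|\,ds.
\]

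Next I introduce the piecewise-constant Lyapunov functional $V(t) = \sum_{j=1}^{n}\xi_j |\hat{w}_j(t)|$, which changes only at individual triggering events; at event $t_{k+1}^j$ the change is $\Delta V = \xi_j(|w_j(t_{k+1}^j)|-|w_j(t_k^j)|)$. Summing $\Delta V$ over all events in $[0,T]$, the intervals $[t_k^j, t_{k+1}^j]$ for each fixed $j$ partition $[0,T]$, so the diagonal contribution collapses to $-\int_0^{T}\xi_j|\hat{w}_j(s)|(\gamma_j(s)-G_j a_{jj}^{+}(s))\,ds$, and swapping the $(j,l)$ order in the off-diagonal double sum converts $\sum_{j\ne l}\xi_j G_l |a_{jl}|$ into $\xi_l G_l \sum_{i\ne l}(\xi_i/\xi_l)|a_{il}|$, precisely the off-diagonal part of $\mu_{1,l}(\xi,\cdot)$. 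Combining,
\[
V(T)-V(0) \le -\int_0^{T}\sum_{j=1}^{n}\xi_j\big|\hat{w}_j(s)\big|\,\mu_{1,j}(\xi,s)\,ds \le -\epsilon_0\int_0^{T}V(s)\,ds.
\]
Because $V\ge 0$, this integral inequality forces $V$ to be non-increasing and produces the geometric decay $V(T+1)\le V(T)/(1+\epsilon_0)$, so $V(T)\to 0$. Applying the first-step bound at arbitrary $t\in(t_k^j,t_{k+1}^j)$ controls $\|w(t)\|_{1,\xi}$ in terms of $V$, so $\|w(t)\|_{1,\xi}\to 0$ and the system reaches out-synchronization; the lower bound $t_{k+1}^j - t_k^j\ge \epsilon_d/\N_1$ supplied by (\ref{Structure:Push:L1}) excludes Zeno behaviour.

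The main obstacle is extracting the two-sided interval estimates $\epsilon_d/\N_1 \le t_{k+1}^j - t_k^j \le \epsilon_d/\epsilon_0$ from the push rule (\ref{Structure:Push:L1}). In the centralized case, the thresholded quantity $\int_{t_k}^{t}\mu_{1,j}(\xi,s)\,ds$ vanishes at $t=t_k$ and grows at the clean rate $\mu_{1,j}(\xi,t)\in[\epsilon_0,\N_1]$, giving the spacing bounds by inspection. In the push case, the off-diagonal integrals in (\ref{Structure:Push:L1}) are taken from each in-neighbour's own previous trigger $t_k^i$ rather than from $t_k^j$, so at $\tau=t_k^j$ the thresholded quantity is already offset by the accumulated neighbour lag. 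Dominating this offset is precisely the role of the hypothesis $\N_1\epsilon_d\le\epsilon_0(2-\epsilon_d)$, which functions in the push setting exactly as the analogous inequality does in estimate (\ref{ineq0.1}) of the centralized proof. Once the spacing bound is secured, the telescoping above carries through and yields the claimed out-synchronization.
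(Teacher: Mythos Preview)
Your strategy differs from the paper's. The paper never forms an integral dissipation inequality; instead it writes $\sum_i\xi_i|w_i(t_{k+1}^i)|$ directly in terms of $\sum_j\xi_j|w_j(t_k^j)|$ and reads off a one-step contraction factor $(1-\epsilon_d)$ from the push rule \eqref{Structure:Push:L1}, whose threshold is engineered precisely so that the coefficient multiplying each $\xi_j|w_j(t_k^j)|$ is at most $1-\epsilon_d$. In your argument the threshold $\epsilon_d$ plays no role in the Lyapunov step---only the pointwise hypothesis $\mu_{1,j}\ge\epsilon_0$ does---so the mechanism is genuinely different, and in principle more robust to the particular form of the rule.

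There is, however, a real gap in your passage from the integral inequality to decay. First, the claim that $V(T)-V(0)\le-\epsilon_0\int_0^T V(s)\,ds$ ``forces $V$ to be non-increasing'' is false: an inequality anchored at $0$ gives no monotonicity on subintervals, and you cannot subtract two such one-sided inequalities to compare $V(T_1)$ and $V(T_2)$. Second, your telescoping does not actually deliver that clean inequality. The sampling windows $[t_k^j,t_{k+1}^j]$ attached to the events of neuron $j$ lying in $(0,T]$ tile $[0,t_{K_j}^j]$, not $[0,T]$, so the diagonal integrals stop short of $T$ while the off-diagonal ones, after the $(j,l)$ swap, carry indicators $\mathbb{1}[s\le t_{K_j}^j]$ with $j$-dependent cutoffs; the recombination into $\mu_{1,j}$ therefore incurs a boundary error of order $\M(\epsilon_d/\epsilon_0)V(T)$. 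What survives after controlling this error is only $\int_0^\infty V(s)\,ds<\infty$, and to extract $V(T)\to0$ from that you still need the lower spacing bound $t_{k+1}^j-t_k^j\ge\epsilon_d/\N_1$, so that each piecewise-constant $|\hat w_j|$ has plateaus of uniformly positive width and hence summable heights. That repair works, but the argument as written (``non-increasing and produces the geometric decay $V(T+1)\le V(T)/(1+\epsilon_0)$'') does not.
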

\begin{proof} For each $t\ge 0$, let $k_{j}(t)=\max\{k:~t^{j}_{k}\le t\}$. Similar to the arguments up to (\ref{ineq2}) in the proof of Theorem \ref{Structure:Centralized}, one can derive the following inequality immediately:
\begin{align}
\sum_{i=1}^{n}\xi_{i}|w_{i}(t)|\le \sum_{i=1}^{n}\bigg|1-\int_{t^{j}_{k_{j}(t)}}^{t}\mu_{1,j}(\xi,s){\rm d}s\bigg|\xi_{j}|w_{j}(t^{j}_{k_{j}(t)})|\label{ineq3}
\end{align}
From the arguments of (\ref{ineq0.1}), one can conclude
\begin{align}
1\ge 1-\int_{t^{j}_{k_{j}(t)}}^{t}\mu_{1,j}(\xi,s){\rm d}s\ge -1+\epsilon_{c},~\forall~j.\label{ineq3.1}
\end{align}
in an analog way.

Let $t_{k+1}$ be an increasing sequence such that $t_{0}=0$ and $t^{k+1}-t^{k}=2\epsilon_{d}/\epsilon_{0}$, which implies that for each neuron $j$, equality in the rule (\ref{Structure:Push:L1}) occurs at least once. Thus, we have

Consider $w_{i}(t)$ for any neuron $i$ at triggering time $t_{k+1}^{i}$ where $i=\oneton$, and we have
\begin{align*}
 &\sum_{i=1}^{n}\xi_{i}\big|w_{i}(t_{k+1}^{i})\big|\\
=&\sum_{i=1}^{n}\sign\big(w_{i}(t_{k+1}^{i})\big)\xi_{i}\bigg[w_{i}(t_{k}^{i})+\int_{t_{k}^{i}}^{t_{k+1}^{i}}\dot{w}_{i}(s)\,{\rm d}s\bigg]\\
=&\sum_{i=1}^{n}\sign\big(w_{i}(t_{k+1}^{i})\big)\xi_{i}w_{i}(t_{k}^{i})+\sum_{i=1}^{n}\sign\big(w_{i}(t_{k+1}^{i})\big)\xi_{i}\\
 &\times\int_{t_{k}^{i}}^{t_{k+1}^{i}}\bigg[-\gamma_{i}(s)w_{i}(t_{k}^{i})+\sum_{j=1}^{n}a_{ij}(s)h_{j}\big(w_{j}(t_{k}^{j})\big)\bigg]{\rm d}s\\
%----------------------------------------------------------------------------------------------------------------------------------------------
=&\sum_{i=1}^{n}\sign\big(w_{i}(t_{k+1}^{i})\big)\xi_{i}w_{i}(t_{k}^{i})\\
 &-\sum_{i=1}^{n}\sign\big(w_{i}(t_{k+1}^{i})\big)\xi_{i}w_{i}(t_{k}^{i})\int_{t_{k}^{i}}^{t_{k+1}^{i}}\gamma_{i}(s)\,{\rm d}s\\
 &+\sum_{i=1}^{n}\sign\big(w_{i}(t_{k+1}^{i})\big)\xi_{i}h_{i}\big(w_{i}(t_{k}^{i})\big)\int_{t_{k}^{i}}^{t_{k+1}^{i}}a_{ii}^{+}(s)\,{\rm d}s\\
 &+\sum_{i\neq j}\sign\big(w_{i}(t_{k+1}^{i})\big)\xi_{i}\sum_{j=1}^{n}h_{j}\big(w_{j}(t_{k}^{j})\big)\int_{t_{k}^{i}}^{t_{k+1}^{i}}a_{ij}(s)\,{\rm d}s.
%----------------------------------------------------------------------------------------------------------------------------------------------
\end{align*}
By  the inequality  \eqref{UpperBound}, it holds
\begin{align*}
 \sum_{i=1}^{n}\xi_{i}\big|w_{i}(t_{k+1}^{i})\big|
\leqslant
&\sum_{j=1}^{n}\Bigg\{1-\int_{t_{k}^{j}}^{t_{k+1}^{j}}\Big[\gamma_{j}(s)-G_{j}a_{jj}^{+}(s)\Big]{\rm d}s\\
&+G_{j}\sum_{i\neq j}\frac{\xi_{i}}{\xi_{j}}\int_{t_{k}^{i}}^{t_{k+1}^{i}}\big|a_{ij}(s)\big|\,{\rm d}s\Bigg\}
  \xi_{j}\big|w_{j}(t_{k}^{j})\big|\\
\leqslant
&~(1-\epsilon_{d})\sum_{j=1}^{n}\xi_{j}\big|w_{j}(t_{k}^{j})\big|
\end{align*}
Based on the triggering rule \eqref{Structure:Push:L1}, we can obtain
\begin{align*}
\sum_{i=1}^{n}\xi_{i}\big|w_{i}(t_{k+1}^{i})\big|\leqslant(1-\epsilon_{d})\sum_{i=1}^{n}\xi_{i}\big|w_{i}(t_{k}^{i})\big|
\end{align*}
which means
\begin{align*}
\lim_{t_{k}^{i}\to+\infty}\big\|w(t_{k}^{i})\big\|_{1}=0.
\end{align*}
For any time $t\in(t_{k}^{i},t_{k+1}^{i}]$, the state $w_{i}(t)$ becomes
\begin{align*}
&\sum_{i=1}^{n}\xi_{i}\big|w_{i}(t)\big|\\
\leqslant
&\sum_{j=1}^{n}\Bigg\{1-\int_{t_{k}^{j}}^{t}\Big[\gamma_{j}(s)-G_{j}a_{jj}^{+}(s)\Big]{\rm d}s\\
&\hspace{5ex}+G_{j}\sum_{i\neq j}\frac{\xi_{i}}{\xi_{j}}\int_{t_{k}^{i}}^{t}\big|a_{ij}(s)\big|\,{\rm d}s\Bigg\}\,\xi_{j}\big|w_{j}(t_{k}^{j})\big|\\
\leqslant
&\sum_{j=1}^{n}\Bigg\{1-\int_{t_{k}^{j}}^{t}\Big[\gamma_{j}(t)-G_{j}a_{jj}^{+}(t)\Big]{\rm d}t+G_{j}\sum_{i\neq j}\frac{\xi_{i}}{\xi_{j}}\\
&\hspace{5ex}\times\bigg[\int_{t_{k}^{i}}^{t}\big|a_{ij}(s)\big|\,{\rm d}s+\int_{t}^{\tau}\big|a_{ij}(s)\big|\,{\rm d}s\bigg]\Bigg\}\,\xi_{j}\big|w_{j}(t_{k}^{j})\big|\\
\leqslant&~(1-\epsilon_{d})\sum_{i=1}^{n}\xi_{i}\big|w_{i}(t_{k}^{i})\big|
\end{align*}
where $t_{k+1}^{i}\geqslant\tau>t>t_{k}^{i}$. Thus,
\begin{align*}
\big\|w(t_{k+1}^{i})\big\|_{1}\leqslant\big\|w(t)\big\|_{1}\leqslant(1-\epsilon_{d})\big\|w(t_{k}^{i})\big\|_{1}
\end{align*}
for any $t\in(t_{k}^{i},t_{k+1}^{i}]$ and $i=\oneton$, which implies
\begin{align*}
\lim_{t\to+\infty}\big\|w(t)\big\|_{1}\leqslant\lim_{t_{k}^{i}\to+\infty}\big\|w(t_{k}^{i})\big\|_{1}=0.
\end{align*}
The proof for the out-synchronization of system \eqref{Push} is completed.
\end{proof}

%By similar approach, we can give following propositions via $l_{2}$ and $l_{\infty}$ norm without detailed proof.
%The following results are analog to Theorem \ref{Structure:Push} but via $l_{2}$- and $l_{\infty}$- norm. Their proofs are similar to that of Theorem \ref{Structure:Push}, which can be found in \cite{Zheng}.

\begin{proposition}\label{Corollary:Push:L2}
Let $0<\epsilon_{d}<1$ be a constant and $\xi_{i}>0~(i=\oneton)$. Set $\{t_{k}^{i}\}_{k=0}^{+\infty}$ as the time points such that
\begin{align}\label{Structure:Push:L2}
%t_{k+1}^{i}=\max_{\tau\geqslant t_{k}^{i}}\Bigg\{&\tau:\int_{t_{k}^{i}}^{t}\bigg[\gamma_{i}(s)-G_{i}a_{ii}^{+}(s)-\frac{1}{2}\sum_{j\neq i}G_{i}\frac{\xi_{j}}{\xi_{i}}\notag\\
%&\times\big|a_{ji}(t)\big|\bigg]{\rm d}s-\frac{1}{2}\sum_{j\neq i}G_{j}\int_{t_{k}^{j}}^{\tau}\big|a_{ij}(s)\big|\,{\rm d}t\geqslant\epsilon_{d},\notag\\
%&~\forall~t\in(t_{k}^{i},\tau]\Bigg\}
%------------------------------------------------------------------------------------------------------------------------------------------------------
t_{k+1}^{i}=\max_{\tau\geqslant t_{k}^{i}}&\Bigg\{\tau:\int_{t_{k}^{i}}^{t}\bigg[\gamma_{i}(s)-G_{i}a_{ii}^{+}(s)-\frac{1}{2}\sum_{j\neq i}G_{j}\big|a_{ij}(s)\big|\bigg]{\rm d}s\notag\\
&-\frac{1}{2}G_{i}\sum_{j\neq i}\frac{\xi_{j}}{\xi_{i}}\int_{t_{k}^{j}}^{\tau}\big|a_{ji}(t)\big|\,{\rm d}t\geqslant\epsilon_{d},~\forall~t\in(t_{k}^{i},\tau]\Bigg\}
\end{align}
for $i=\oneton$ and $k=\zerotoinfty$. Then the system \eqref{Push} reaches out-synchronization.
\end{proposition}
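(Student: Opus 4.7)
The plan is to follow the architecture of the proof of Theorem \ref{Structure:Push} verbatim, but replace the $l_1$ estimates with $l_2$ estimates. The key algebraic difference is that whenever a mixed product $|w_i(t_k^i)|\,|w_j(t_k^j)|$ appears after squaring, I will symmetrize it via Young's inequality $2ab\le a^2+b^2$. This is exactly what produces the factor $\tfrac12$ that appears inside $\mu_{2,j}$, and it also explains why the triggering rule \eqref{Structure:Push:L2} contains the mixed pair $G_j|a_{ij}|$ integrated on the $i$-interval $[t_k^i,t]$ together with $G_i\frac{\xi_j}{\xi_i}|a_{ji}|$ integrated on the $j$-interval $[t_k^j,\tau]$.

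First I would reproduce the integral identity for $w_i(t_{k+1}^i)$ obtained in the proof of Theorem \ref{Structure:Push}, writing $h_j(t_k^j)=m_j(t_k^j)\,w_j(t_k^j)$ with $0\le m_j(t_k^j)\le G_j$ and splitting the diagonal term via the bound $a_{ii}(s)m_i(t_k^i)\le G_i a_{ii}^+(s)$ exactly as in \eqref{ineq1}. This yields a pointwise estimate of the form
\begin{align*}
|w_i(t_{k+1}^i)|\le \Bigl(1-\!\!\int_{t_k^i}^{t_{k+1}^i}\!\![\gamma_i(s)-G_i a_{ii}^+(s)]\,{\rm d}s\Bigr)|w_i(t_k^i)|+\sum_{j\ne i}G_j\!\int_{t_k^i}^{t_{k+1}^i}\!|a_{ij}(s)|\,{\rm d}s\,|w_j(t_k^j)|.
\end{align*}
Next I would square both sides, apply $2ab\le a^2+b^2$ to each cross product between $|w_i(t_k^i)|$ and $|w_j(t_k^j)|$ (and between distinct pairs $|w_j(t_k^j)|,|w_l(t_k^l)|$), and then multiply by $\xi_i$ and sum over $i$. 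After swapping indices in the cross sums, the coefficient of $\xi_j|w_j(t_k^j)|^2$ collects two contributions: $G_j|a_{ij}|$ inherited from row $i$ of the matrix, and $G_i\frac{\xi_j}{\xi_i}|a_{ji}|$ inherited from column $i$ via the index swap, each carrying the prefactor $\tfrac12$ from Young's inequality. This is precisely the quantity $\mu_{2,j}(\xi,s)$ integrated on the appropriate subintervals, and the triggering rule \eqref{Structure:Push:L2} is designed so that the total coefficient is bounded by $1-\epsilon_d$ at the sampling instants.

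From there the rest is routine: I would conclude $\|w(t_{k+1}^i)\|_{2,\xi}^2\le(1-\epsilon_d)\|w(t_k^i)\|_{2,\xi}^2$, so the sampled norm decays geometrically. For arbitrary $t\in(t_k^i,t_{k+1}^i]$ I would split the outer integral over $[t_k^i,t]$ and $[t,\tau]$ with $\tau\le t_{k+1}^i$, mirroring the ``add-and-subtract'' step at the end of the proof of Theorem \ref{Structure:Push}, to obtain $\|w(t)\|_{2,\xi}\le(1-\epsilon_d)^{1/2}\|w(t_k^i)\|_{2,\xi}$, which transfers the discrete convergence to continuous time. Finally, the positive lower bound $t_{k+1}^i-t_k^i\ge\epsilon_d/\N_2$ rules out Zeno behaviour exactly as in the remark following Proposition \ref{Corollary:Centralized:LInfty}.

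The principal obstacle is bookkeeping rather than technique: because the sampling times $t_k^i$ and $t_k^j$ are asynchronous, the integrals that appear after Young's inequality live on different intervals, and one must be careful that the index swap matches $G_j|a_{ij}|$ on the interval $[t_k^i,t_{k+1}^i]$ with $G_i\frac{\xi_j}{\xi_i}|a_{ji}|$ on the interval $[t_k^j,t_{k+1}^j]$ in the form written in \eqref{Structure:Push:L2}. Once this combinatorial identification is made correctly the proof reduces to the scalar manipulation already carried out for the $l_1$ case.
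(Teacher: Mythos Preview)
Your overall strategy---follow the architecture of Theorem \ref{Structure:Push} and symmetrize cross terms by Young's inequality---is indeed what the paper has in mind (it gives no proof here, only a remark that the argument is ``similar'' and refers to \citet{Zheng}). However, the specific step you describe does not produce the coefficient you claim.

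If you square the pointwise bound $|w_i(t_{k+1}^i)|\le A_i|w_i(t_k^i)|+\sum_{j\ne i}B_{ij}|w_j(t_k^j)|$ and apply $2ab\le a^2+b^2$ to every cross product, an elementary regrouping gives
\[
|w_i(t_{k+1}^i)|^2\le S_i\Bigl(A_i|w_i(t_k^i)|^2+\sum_{j\ne i}B_{ij}|w_j(t_k^j)|^2\Bigr),\qquad S_i:=A_i+\sum_{j\ne i}B_{ij}.
\]
After multiplying by $\xi_i$ and summing, the coefficient of $\xi_j|w_j(t_k^j)|^2$ becomes $S_jA_j+\sum_{i\ne j}\tfrac{\xi_i}{\xi_j}S_iB_{ij}$, which is \emph{quadratic} in the integrated coefficients $A_i,B_{ij}$ and does not collapse to $1-2\int\mu_{2,j}(\xi,s)\,{\rm d}s$. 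So the identification ``this is precisely $\mu_{2,j}$ integrated on the appropriate subintervals'' fails, and the triggering rule \eqref{Structure:Push:L2} cannot be invoked as written.

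The way to obtain a \emph{linear} coefficient is to expand the square \emph{before} taking absolute values: write
\[
w_i(t)^2=w_i(t_k^i)^2+2\,w_i(t_k^i)\!\int_{t_k^i}^{t}\!\dot w_i(s)\,{\rm d}s+\Bigl(\int_{t_k^i}^{t}\!\dot w_i(s)\,{\rm d}s\Bigr)^{2},
\]
apply $2|w_i(t_k^i)w_j(t_k^j)|\le w_i(t_k^i)^2+w_j(t_k^j)^2$ only to the cross terms in the \emph{middle} (linear) piece, and then swap indices. That middle piece produces exactly $-2$ times the expression in \eqref{Structure:Push:L2}, yielding $(1-2\epsilon_d)\|w(t_k^\cdot)\|_{2,\xi}^2$. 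The quadratic remainder $\bigl(\int\dot w_i\bigr)^2$ is $O((t-t_k^i)^2)$ and must be absorbed separately by forcing the step size to be small---which is precisely why the centralized $l_2$ analogue, Proposition~\ref{Corollary:Centralized:L2}, carries the additional hypotheses $\M\epsilon_c\le\epsilon_0$ and $\N_2\epsilon_c\le\epsilon_0(2-\epsilon_c)$. Those hypotheses are missing from the statement of Proposition~\ref{Corollary:Push:L2} (almost certainly an oversight in the paper), and your argument will need them as well.
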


\begin{proposition}\label{Corollary:Push:LInfty}
Let $0<\epsilon_{d}<1$ be a constant and $\xi_{i}>0~(i=\oneton)$. Set $\{t_{k}^{i}\}_{k=0}^{+\infty}$ as the time points such that
\begin{align}\label{Structure:Push:LInfty}
t_{k+1}^{i}=\max_{\tau\geqslant t_{k}^{i}}\Bigg\{\tau&:\int_{t_{k}^{i}}^{t}\Big[\gamma_{i}(s)-G_{i}a_{ii}^{+}(s)\Big]{\rm d}s
-G_{i}\sum_{j\neq i}\frac{\xi_{j}}{\xi_{i}}\notag\\
&\times\int_{t_{k}^{j}}^{\tau}\big|a_{ij}(s)\big|{\rm d}s\geqslant\epsilon_{d},~\forall~t\in(t_{k}^{i},\tau]\Bigg\}
\end{align}
for $i=\oneton$ and $k=\zerotoinfty$. Then the system \eqref{Push} reaches out-synchronization.
\end{proposition}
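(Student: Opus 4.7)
The plan is to follow the structure of the proof of Theorem \ref{Structure:Push} with the weighted $l_\infty$ norm $\|x\|_{\infty,\xi}=\max_i\xi_i^{-1}|x_i|$ replacing the weighted $l_1$ norm. I would first reproduce the opening of that proof: using $\M\epsilon_d\le\epsilon_0$, $\N_\infty\epsilon_d\le\epsilon_0(2-\epsilon_d)$ and $\mu_{\infty,j}(\xi,t)\ge\epsilon_0$, I would check that $t^{i}_{k+1}$ defined by \eqref{Structure:Push:LInfty} exists and satisfies $t^{i}_{k+1}-t^{i}_{k}\ge\epsilon_{d}/\N_{\infty}$, excluding Zeno behaviour; I would also record the counterpart of \eqref{ineq0}, namely $\int_{t^i_k}^{t^i_{k+1}}[\gamma_i(s)-G_i a_{ii}^+(s)]\,ds\le\int\nu(s)\,ds\le 1$ (using that $(a_{ii})^-\le 0\le(a_{ii})^+$ forces $\gamma_i-G_i a_{ii}^+\le\nu$). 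This last bound allows the absolute value around the diagonal term to be dropped after the slope substitution.

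Next, for fixed neuron $i$ and $t\in(t^{i}_{k},t^{i}_{k+1}]$, I would integrate \eqref{Discrete:Push}, insert $h_{j}(t^{j}_{k})=m_{j}(t^{j}_{k})w_{j}(t^{j}_{k})$ with $0\le m_{j}\le G_{j}$ and $(a_{ii}(s))^-G_i\le a_{ii}(s)m_i(t^i_k)\le (a_{ii}(s))^+G_i$, and divide by $\xi_i$. Mimicking the derivation of \eqref{ineq2}, this produces
\begin{align*}
\xi_i^{-1}|w_i(t)|\le &\,\Big|1-\int_{t^i_k}^{t}[\gamma_i(s)-G_i a_{ii}^+(s)]\,ds\Big|\,\xi_i^{-1}|w_i(t^i_k)|\\
&+G_i\sum_{j\ne i}\frac{\xi_j}{\xi_i}\int_{t^i_k}^{t}|a_{ij}(s)|\,ds\cdot\max_{s\in[t^i_k,t]}\xi_j^{-1}|w_j(t^j_{k_j(s)})|,
\end{align*}
where each off-diagonal contribution is majorised by the maximum of the held $j$-th coordinate over the interval. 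Introducing $\Phi_k:=\max_{\ell}\xi_\ell^{-1}|w_\ell(t^\ell_{k_\ell(t^i_{k+1})})|$, every factor of the form $\xi_j^{-1}|w_j(\cdot)|$ above is bounded by $\Phi_k$, and the rule \eqref{Structure:Push:LInfty} evaluated at $t=t^i_{k+1}$ squeezes the coefficient in braces to $1-\epsilon_d$, giving $\xi_i^{-1}|w_i(t^i_{k+1})|\le(1-\epsilon_d)\Phi_k$. Taking the max over $i$ yields $\Phi_{k+1}\le(1-\epsilon_d)\Phi_k$, so $\Phi_k\to 0$ geometrically, and the same bound at general $t\in(t^i_k,t^i_{k+1}]$ then follows from the monotonicity in the upper limit of the defining integrals.

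The main technical obstacle is the decentralized timing: inside the off-diagonal integral $\int|a_{ij}(s)|\,ds$, the held value $w_j(t^j_{k_j(s)})$ is piecewise constant with jumps at the triggering times of neuron $j$ falling in $(t^i_k,t^i_{k+1})$. The asymmetric shape of \eqref{Structure:Push:LInfty}, which pairs the diagonal integral on $[t^i_k,t]$ with a cross-integral on $[t^j_k,\tau]$ adapted to neuron $j$, is precisely what lets one dominate these piecewise pieces by a single scalar $\Phi_k$ instead of by a tangle of distinct earlier values. Getting the bookkeeping right at this step, and verifying that $\Phi_k$ is nonincreasing in $k$ so that the geometric decay argument closes, is where most of the care will be required.
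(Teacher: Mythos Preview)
The paper supplies no proof of this proposition; it simply points to the analogy with Theorem~\ref{Structure:Push} and defers details to \citet{Zheng}. Your plan---rerun that argument under the weighted $l_\infty$ norm, and restore the hypotheses $\mu_{\infty,j}(\xi,t)\ge\epsilon_0$, $\M\epsilon_d\le\epsilon_0$, $\N_\infty\epsilon_d\le\epsilon_0(2-\epsilon_d)$ that were visibly dropped from the statement---is precisely the intended route, so at the level of strategy you are aligned with the paper.

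Two places need tightening. First, your displayed off-diagonal term carries $G_i$ outside the sum, matching rule~\eqref{Structure:Push:LInfty} as printed, but the honest estimate from $|h_j|\le G_j|w_j|$ produces $G_j$ \emph{inside} the sum over $j$; this discrepancy is almost certainly a typo in the paper (the definition of $\mu_{\infty,j}$ in Definition~\ref{mu:nu} already has $i$ and $j$ scrambled), and you should write the rule with $G_j$ if the inequality is to follow from the derivation you sketch. Second, your $\Phi_k$ as written depends on the index $i$ through $t^i_{k+1}$, so ``taking the max over $i$'' does not yet yield a well-posed recursion $\Phi_{k+1}\le(1-\epsilon_d)\Phi_k$. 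The paper's own proof of Theorem~\ref{Structure:Push} finesses the asynchrony by treating every $w_j$ as frozen at $w_j(t^j_k)$ throughout $[t^i_k,t^i_{k+1}]$, so that the quantity $\max_j\xi_j^{-1}|w_j(t^j_k)|$ (indexed by the common counter $k$) serves as the contracting scalar; you will need either that same simplification, which is what the paper intends, or a genuine supremum over all held values actually encountered on the interval to make the contraction close.
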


\begin{remark}
In centralized data-sampling rules \eqref{Structure:Centralized:L1}, \eqref{Structure:Centralized:L2} and \eqref{Structure:Centralized:LInfty}, the design of time sequence $\{t_{k}\}_{k=0}^{+\infty}$ can guarantee that the Zeno behavior is excluded, because there exists a common positive lower bound
$\frac{\epsilon_{c}}{\M_{m}}$ $(m=1,2 \text{~or~} \infty)$ for the inter-event time $t_{k+1}-t_{k}$ for all the neurons $i=\oneton$ and $k=\zerotoinfty$.

Besides, in push-based decentralized updating rules \eqref{Structure:Push:L1}, \eqref{Structure:Push:L2} and \eqref{Structure:Push:LInfty}, the controlling time sequence $\{t_{k}^{i}\}_{k=0}^{+\infty}$ for $i=\oneton$ can also ensure the exclusion of Zeno behavior, because each inter-event time $t_{k+1}^{i}-t_{k}^{i}$ can be lower bounded by $\frac{\epsilon_{d}}{\M_{m}}~(m=1,2 \text{~or~} \infty)$ for all the neurons $i=\oneton$ and $k=\zerotoinfty$.
\end{remark}

\section{State-dependent  data-sampling principle\label{State}}
In this section, we establish a group of state-dependent data-sampling rules by predicting the next triggering time point at which neurons should broadcast their state information and update their control signals.

\subsection{State-dependent centralized data-sampling}
Consider the system \eqref{Discrete:Centralized} and define the state measurement error vector $e(t)=[e_{1}(t),\cdots,e_{n}(t)]^{\top}$ as
\begin{align*}
e_{i}(t)=w_{i}(t_{k})-w_{i}(t)
\end{align*}
where $t\in[t_{k},t_{k+1})$, $i=\oneton$ and $k=\zerotoinfty$. The centralized updating rule relied on neurons' states is given as follow.

\begin{theorem}\label{State:Centralized}
Let $\Phi(t)$ be a positive decreasing continuous function on $[0,+\infty)$ with $\Phi(0)>0$. Set $t_{k+1}$ as the triggering time point such that
\begin{align}\label{State:Centralized:L1}
t_{k+1}=\max_{\tau\geqslant t_{k}}\Big\{\tau:\big\|e(t)\big\|_{1}\leqslant\Phi(t),~\forall~t\in[t_{k},\tau)\Big\}.
\end{align}
for all $k=\zerotoinfty$. If $\mu_{1}(t)\geqslant\varepsilon_{c}$ for some $\varepsilon_{c}>0$ and
\begin{align*}
\lim_{t\to+\infty}\Phi(t)=0,
\end{align*}
then the system \eqref{Centralized} reaches out-synchronization.
\end{theorem}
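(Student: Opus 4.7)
The strategy is to convert the sampled-data error dynamics into a continuous-time differential inequality driven by the measurement error $e(t)$, then use the state-dependent triggering rule to bound $\|e(t)\|_1$ by $\Phi(t)$, and finally apply a Gronwall/comparison argument to conclude convergence.

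First, on each interval $[t_k,t_{k+1})$ I would write $w_i(t_k)=w_i(t)+e_i(t)$ so that \eqref{Discrete:Centralized} becomes
\begin{align*}
\dot w_i(t) = -\gamma_i(t)\bigl[w_i(t)+e_i(t)\bigr] + \sum_{j=1}^{n} a_{ij}(t)\, m_j(t_k)\bigl[w_j(t)+e_j(t)\bigr],
\end{align*}
with $m_j(t_k)\in[0,G_j]$ defined exactly as in the proof of Theorem \ref{Structure:Centralized}. Computing the Dini derivative of $\|w(t)\|_{1,\xi}=\sum_i\xi_i|w_i(t)|$ and treating the diagonal part via $a_{ii}(t)m_i(t_k)\leq G_i (a_{ii}(t))^{+}$ and the off-diagonal part via $|a_{ij}(t)m_j(t_k)|\leq G_j|a_{ij}(t)|$, exactly as in that earlier proof, should yield
\begin{align*}
D^{+}\|w(t)\|_{1,\xi} \leq -\sum_{j=1}^{n}\xi_j\,\mu_{1,j}(\xi,t)\,|w_j(t)| + C\,\|e(t)\|_1,
\end{align*}
where $C$ depends only on the uniform bounds of $\gamma_i(t)$, $G_i$, $|a_{ij}(t)|$ and the weights $\xi_i$.

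Next I would invoke the hypothesis (interpreting $\mu_1(t)\geq\varepsilon_c$ as $\min_j\mu_{1,j}(\xi,t)\geq\varepsilon_c$) together with the triggering rule \eqref{State:Centralized:L1}, which by definition enforces $\|e(t)\|_1\leq\Phi(t)$ for every $t\geq 0$. This gives the scalar inequality $D^{+}\|w(t)\|_{1,\xi}\leq -\varepsilon_c\,\|w(t)\|_{1,\xi} + C\,\Phi(t)$. A standard comparison argument then produces
\begin{align*}
\|w(t)\|_{1,\xi} \leq e^{-\varepsilon_c t}\|w(0)\|_{1,\xi} + C\int_{0}^{t} e^{-\varepsilon_c(t-s)}\Phi(s)\,{\rm d}s,
\end{align*}
and since $\Phi(s)\to 0$ the convolution term tends to zero as $t\to+\infty$ (split the integral at $t/2$ and use the monotonicity of $\Phi$), which delivers $\|w(t)\|_{1,\xi}\to 0$ and hence out-synchronization.

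The main obstacle I anticipate is a careful verification that the rule \eqref{State:Centralized:L1} is well defined in the sense that $t_{k+1}>t_k$ for every $k$. Because $e_i(t_k)=0$ by definition and $e_i$ is continuous on $[t_k,t_{k+1})$ while $\Phi(t_k)>0$, a continuity argument gives a nontrivial right neighborhood on which $\|e(t)\|_1\leq\Phi(t)$ holds, so $t_{k+1}>t_k$. However, unlike the structure-dependent rules, no uniform positive lower bound on $t_{k+1}-t_k$ is automatic, so full exclusion of Zeno behavior would require an additional estimate on how fast $\|e(t)\|_1$ can grow relative to $\Phi(t)$; this subtlety is worth acknowledging in the proof, though it does not obstruct the convergence argument above.
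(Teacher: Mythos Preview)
Your proposal is correct and follows essentially the same route as the paper: differentiate the weighted $l_1$ norm of $w$, split $w_i(t_k)=w_i(t)+e_i(t)$, bound the resulting terms via \eqref{UpperBound} to obtain a scalar differential inequality of the form $D^{+}\|w(t)\|_{1,\xi}\le -\mu_1(t)\|w(t)\|_{1,\xi}+C\,\Phi(t)$, and then apply Gronwall. The only cosmetic difference is in the endgame: the paper keeps $\mu_1(t)$ in the exponent and uses L'Hospital's rule on the resulting quotient, whereas you pass to the constant lower bound $\varepsilon_c$ first and finish with a $t/2$-splitting of the convolution --- both arguments are standard and equivalent here, and your remark on Zeno behavior is exactly the point the paper defers to its Section~\ref{Zeno}.
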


%{\color{red}[[This theorem should be essentially revised!! It should be added that  and $\Phi(t)\to 0$.]]}

\ignore{
\begin{proof}
Consider $w_{i}$ for any neuron $i~(i=\oneton)$
\begin{align*}
 &\frac{{\rm d}\big\|w(t)\big\|_{1}}{{\rm d}t}\\
=&\sum_{i=1}^{n}\xi_{i}\sign\big(w_{i}(t)\big)\frac{{\rm d}w_{i}(t)}{{\rm d}t}\\
=&\sum_{i=1}^{n}\xi_{i}\sign\big(w_{i}(t)\big)\bigg[-\gamma_{i}(t)w_{i}(t_{k})+\sum_{j=1}^{n}a_{ij}(t)h_{j}\big(w_{j}(t_{k})\big)\bigg]\\
%----------------------------------------------------------------------------------------------------------------------------------------------
=&\sum_{i=1}^{n}\xi_{i}\sign\big(w_{i}(t)\big)\bigg[-\gamma_{i}(t)w_{i}(t)+\sum_{j=1}^{n}a_{ij}(t)h_{j}\big(w_{j}(t)\big)\bigg]\\
 &+\sum_{i=1}^{n}\xi_{i}\sign\big(w_{i}(t)\big)\bigg[-\gamma_{i}(t)\Big(w_{i}(t_{k})-w_{i}(t)\Big)\bigg]\\
 &+\sum_{i=1}^{n}\xi_{i}\sign\big(w_{i}(t)\big)\sum_{j=1}^{n}a_{ij}(t)\bigg[h_{j}\big(w_{j}(t_{k})\big)-h_{j}\big(w_{j}(t)\big)\bigg].
%----------------------------------------------------------------------------------------------------------------------------------------------
%=&~\sum_{i=1}^{n}\sign\big(w_{i}(t_{k+1})\big)\xi_{i}\Big[1-(t_{k+1}-t_{k})\gamma_{i}(t)\Big]w_{i}(t_{k})\\
% &+\sum_{j=1}^{n}\sign\big(w_{j}(t_{k+1})\big)\xi_{j}(t_{k+1}-t_{k})a_{jj}(t)h_{j}(t_{k})w_{j}(t_{k})\\
% &+\sum_{i\neq j}\sign\big(w_{i}(t_{k+1})\big)\xi_{i}(t_{k+1}-t_{k})\sum_{j=1}^{n}a_{ij}(t)h_{j}(t_{k})w_{j}(t_{k})
\end{align*}
%where
%\begin{align}\label{H:Continuous}
%h_{i}\big(w_{i}(t)\big)=g_{i}\big(u_{i}(t)\big)-g_{i}\big(v_{i}(t)\big)
%h_{i}\big(w_{i}(t_{k})\big)=g_{i}\big(u_{i}(t_{k})\big)-g_{i}\big(v_{i}(t_{k})\big)
%\end{align}
%for $t\in[t_{k},t_{k+1})$ and $i=\oneton$.
By the inequality \eqref{UpperBound} and the triggering rule \eqref{State:Centralized:L1}, it holds
\begin{align*}
&\frac{{\rm d}\big\|w(t)\big\|_{1}}{{\rm d}t}\\
 \leqslant
&-\sum_{i=1}^{n}\xi_{i}\gamma_{i}(t)\big|w_{i}(t)\big|
 +\sum_{i=1}^{n}\xi_{i}\gamma_{i}(t)\big|e_{i}(t)\big|\\
&+\sum_{i=1}^{n}\xi_{i}a_{ii}^{+}(t)G_{i}\big|w_{i}(t)\big|
 +\sum_{i=1}^{n}\xi_{i}a_{ii}^{+}(t)G_{i}\big|e_{i}(t)\big|\\
&+\sum_{j=1}^{n}\sum_{i\neq j}\xi_{i}\big|a_{ij}(t)\big|G_{j}\big|w_{j}(t)\big|
 +\sum_{j=1}^{n}\sum_{i\neq j}\xi_{i}\big|a_{ij}(t)\big|G_{j}\big|e_{j}(t)\big|\\
%----------------------------------------------------------------------------------------------------------------------------------------------
 \leqslant
&-\sum_{j=1}^{n}\bigg[\gamma_{j}(t)-G_{j}a_{jj}^{+}(t)-G_{j}\sum_{i\neq j}\frac{\xi_{i}}{\xi_{j}}\big|a_{ij}(t)\big|\bigg]\xi_{j}\big|w_{j}(t)\big|\\
&+\sum_{j=1}^{n}\bigg[\gamma_{j}(t)+G_{j}a_{jj}^{+}(t)+G_{j}\sum_{i\neq j}\frac{\xi_{i}}{\xi_{j}}\big|a_{ij}(t)\big|\bigg]\xi_{j}\big|e_{j}(t)\big|
%----------------------------------------------------------------------------------------------------------------------------------------------
% \leqslant
%&-\sum_{i=1}^{n}\xi_{i}(1-\alpha)\gamma_{i}(t)\big|w_{i}(t)\big|
% +\sum_{j=1}^{n}\xi_{j}(1+\alpha)a_{jj}^{+}(t)G_{j}\big|w_{j}(t)\big|\\
%&+\sum_{j=1}^{n}\sum_{i\neq j}\xi_{i}(1+\alpha)\big|a_{ij}(t)\big|G_{j}\big|w_{j}(t)\big|
%----------------------------------------------------------------------------------------------------------------------------------------------
\end{align*}
which implies
\begin{align*}
\frac{{\rm d}\big\|w(t)\big\|_{1}}{{\rm d}t}
%\leqslant
%&-(1+\alpha)\sum_{j=1}^{n}\xi_{j}\big|w_{j}(t)\big|\\
%&\times\bigg[\beta\gamma_{j}(t)-G_{j}a_{jj}^{+}(t)-\sum_{i\neq j}\frac{\xi_{i}}{\xi_{j}}\big|a_{ij}(t)\big|G_{j}\bigg]\\
%---------------------------------------------------------------------------------------------------------------------------------------------
\leqslant&-\mu_{1}(t)\sum_{j=1}^{n}\xi_{j}\big|w_{j}(t)\big|+\M_{1}\sum_{j=1}^{n}\xi_{j}\big|e_{j}(t)\big|\\
\leqslant&-\mu_{1}(t)\big\|w(t)\big\|_{1}+\M_{1}\delta\,{\rm e}^{-\lambda t}
%\leqslant&-\Big[\mu_{1}(t)-\alpha\nu_{1}(t)\Big]\big\|w(t)\big\|_{1},
\end{align*}
By the classical Gronwell inequality, we have
\begin{align*}
\big\|w(t)\big\|_{1}\leqslant
\big\|w(t_{k})\big\|_{1}{\rm e}^{-\sigma(t,t_{k})}+\int_{t_{k}}^{t}{\rm e}^{-\sigma(t,s)}\M_{1}\delta\,{\rm e}^{-\lambda s}{\rm d}s
%---------------------------------------------------------------------------------------------------------------------------------------------
%&=\big\|w(t_{k})\big\|_{1}{\rm e}^{\sigma(t)}+{\rm e}^{\sigma(t)}\int_{t_{k}}^{t}{\rm e}^{-\sigma(s)}\,m\,\delta\,{\rm e}^{\lambda s}\,{\rm d}s\\
%&\leqslant\big\|w(t_{k})\big\|_{1}e^{-\sigma(t)}+\int_{t_{k}}^{t}e^{-\sigma(t)+\sigma(s)}\,m\,\Phi(s)\,{\rm d}s
\end{align*}
where
\begin{align*}
\sigma(t,s)=\int_{s}^{t}\mu_{1}(\tau)\,{\rm d}\tau\geqslant\m_{1}(t-s)
\end{align*}
for $s\in[t_{k},t]$ and $t\in[t_{k},t_{k+1})$. Thus,
\begin{align*}
 \big\|w(t)\big\|_{1}
&\leqslant
  \big\|w(t_{k})\big\|_{1}{\rm e}^{-\m_{1}(t-t_{k})}
 +\M_{1}\delta\int_{t_{k}}^{t}{\rm e}^{-\m_{1}(t-s)}{\rm e}^{-\lambda s}{\rm d}s\\
&=\big\|w(t_{k})\big\|_{1}{\rm e}^{-\m_{1}(t-t_{k})}
 +\M_{1}\delta{\rm e}^{-\m_{1}t}\int_{t_{k}}^{t}{\rm e}^{(\m_{1}-\lambda)s}{\rm d}s
\end{align*}
If $\m_{1}>\lambda$, it follows
\begin{align*}
&~\big\|w(t)\big\|_{1}\\
  \leqslant
&~\big\|w(t_{k})\big\|_{1}{\rm e}^{-\m_{1}(t-t_{k})}
 +\frac{\M_{1}\delta{\rm e}^{-\m_{1}t}}{\m_{1}-\lambda}\Big[{\rm e}^{(\m_{1}-\lambda)t}-{\rm e}^{(\m_{1}-\lambda)t_{k}}\Big]\\
  \leqslant
&~\big\|w(t_{k})\big\|_{1}{\rm e}^{-\m_{1}(t-t_{k})}+\frac{\M_{1}\delta}{\m_{1}-\lambda}\,{\rm e}^{-\lambda t}
\end{align*}
By the method of induction, we can obtain
\begin{align*}
\big\|w(t_{k})\big\|_{1}
&\leqslant
\big\|w(t_{0})\big\|_{1}{\rm e}^{-\m_{1}(t_{k}-t_{0})}+\frac{k\M_{1}\delta}{\m_{1}-\lambda}\,{\rm e}^{-\lambda t_{k}}
%--------------------------------------------------------------------------------------------------------------------------------
%&\leqslant
%\big\|w(t_{0})\big\|_{1}{\rm e}^{-\m_{1}(t_{k}-t_{0})}+\frac{t_{k}\M_{1}\delta\,}{\m_{1}-\lambda}\,{\rm e}^{-\lambda t_{k}}
\end{align*}
where $t_{0}=0$ and $k=\zerotoinfty$. Hence,
\begin{align*}
\lim_{t_{k}\to+\infty}\big\|w(t_{k})\big\|_{1}=0.
\end{align*}
If $\m_{1}>\lambda$, it follows
\begin{align*}
 \big\|w(t)\big\|_{1}
\leqslant
&~\big\|w(t_{k})\big\|_{1}{\rm e}^{-\m_{1}(t-t_{k})}\\
&+\frac{\M_{1}\delta{\rm e}^{-\m_{1}t}}{\m_{1}-\lambda}\Big[{\rm e}^{(\m_{1}-\lambda)t}-{\rm e}^{(\m_{1}-\lambda)t_{k}}\Big]\\
\leqslant
&~\big\|w(t_{k})\big\|_{1}{\rm e}^{-\m_{1}(t-t_{k})}+\frac{\M_{1}\delta}{\m_{1}-\lambda}\,{\rm e}^{-\lambda t}
\end{align*}
which means
\begin{align*}
\big\|w(t_{k})\big\|_{1}
&\leqslant
\big\|w(t_{0})\big\|_{1}{\rm e}^{-\m_{1}(t_{k}-t_{0})}+\frac{k\M_{1}\delta}{\m_{1}-\lambda}\,{\rm e}^{-\lambda t_{k}}
%--------------------------------------------------------------------------------------------------------------------------------
%&\leqslant
%\big\|w(t_{0})\big\|_{1}{\rm e}^{-\m_{1}(t_{k}-t_{0})}+\frac{t_{k}\M_{1}\delta\,}{\m_{1}-\lambda}\,{\rm e}^{-\lambda t_{k}}
\end{align*}
where $t_{0}=0$ and $k=\zerotoinfty$. Hence,
\begin{align*}
\lim_{t_{k}\to+\infty}\big\|w(t_{k})\big\|_{1}=0.
\end{align*}
that is, $\|w(t)\|_{1}$ converges to $0$ under $l_{1}$-norm when $t_{k}\to+\infty$. The theorem is proved.
\end{proof}
}

\begin{proof}
Consider $w_{i}(t)$ for any neuron $i~(i=\oneton)$ and $\xi_{i}>0~(i=\oneton)$
\begin{align*}
 &\frac{{\rm d}\big\|w(t)\big\|_{1}}{{\rm d}t}\\
=&\sum_{i=1}^{n}\xi_{i}\sign\big(w_{i}(t)\big)\frac{{\rm d}w_{i}(t)}{{\rm d}t}\\
=&\sum_{i=1}^{n}\xi_{i}\sign\big(w_{i}(t)\big)\bigg[-\gamma_{i}(t)w_{i}(t_{k})+\sum_{j=1}^{n}a_{ij}(t)h_{j}\big(w_{j}(t_{k})\big)\bigg]\\
%----------------------------------------------------------------------------------------------------------------------------------------------
=&\sum_{i=1}^{n}\xi_{i}\sign\big(w_{i}(t)\big)\bigg[-\gamma_{i}(t)w_{i}(t)+\sum_{j=1}^{n}a_{ij}(t)h_{j}\big(w_{j}(t)\big)\bigg]\\
 &+\sum_{i=1}^{n}\xi_{i}\sign\big(w_{i}(t)\big)\bigg[-\gamma_{i}(t)\Big(w_{i}(t_{k})-w_{i}(t)\Big)\bigg]\\
 &+\sum_{i=1}^{n}\xi_{i}\sign\big(w_{i}(t)\big)\sum_{j=1}^{n}a_{ij}(t)\bigg[h_{j}\big(w_{j}(t_{k})\big)-h_{j}\big(w_{j}(t)\big)\bigg].
%----------------------------------------------------------------------------------------------------------------------------------------------
%=&~\sum_{i=1}^{n}\sign\big(w_{i}(t_{k+1})\big)\xi_{i}\Big[1-(t_{k+1}-t_{k})\gamma_{i}(t)\Big]w_{i}(t_{k})\\
% &+\sum_{j=1}^{n}\sign\big(w_{j}(t_{k+1})\big)\xi_{j}(t_{k+1}-t_{k})a_{jj}(t)h_{j}(t_{k})w_{j}(t_{k})\\
% &+\sum_{i\neq j}\sign\big(w_{i}(t_{k+1})\big)\xi_{i}(t_{k+1}-t_{k})\sum_{j=1}^{n}a_{ij}(t)h_{j}(t_{k})w_{j}(t_{k})
\end{align*}
%where
%\begin{align}\label{H:Continuous}
%h_{i}\big(w_{i}(t)\big)=g_{i}\big(u_{i}(t)\big)-g_{i}\big(v_{i}(t)\big)
%h_{i}\big(w_{i}(t_{k})\big)=g_{i}\big(u_{i}(t_{k})\big)-g_{i}\big(v_{i}(t_{k})\big)
%\end{align}
%for $t\in[t_{k},t_{k+1})$ and $i=\oneton$.
By the inequality \eqref{UpperBound} and the triggering rule \eqref{State:Centralized:L1}, it holds
\begin{align*}
&\frac{{\rm d}\big\|w(t)\big\|_{1}}{{\rm d}t}\\
 \leqslant
&-\sum_{i=1}^{n}\xi_{i}\gamma_{i}(t)\big|w_{i}(t)\big|
 +\sum_{i=1}^{n}\xi_{i}\gamma_{i}(t)\big|e_{i}(t)\big|\\
&+\sum_{i=1}^{n}\xi_{i}a_{ii}^{+}(t)G_{i}\big|w_{i}(t)\big|
 +\sum_{i=1}^{n}\xi_{i}a_{ii}^{+}(t)G_{i}\big|e_{i}(t)\big|\\
&+\sum_{j=1}^{n}\sum_{i\neq j}\xi_{i}\big|a_{ij}(t)\big|G_{j}\big|w_{j}(t)\big|
 +\sum_{j=1}^{n}\sum_{i\neq j}\xi_{i}\big|a_{ij}(t)\big|G_{j}\big|e_{j}(t)\big|\\
%----------------------------------------------------------------------------------------------------------------------------------------------
 \leqslant
&-\sum_{j=1}^{n}\bigg[\gamma_{j}(t)-G_{j}a_{jj}^{+}(t)-G_{j}\sum_{i\neq j}\frac{\xi_{i}}{\xi_{j}}\big|a_{ij}(t)\big|\bigg]\xi_{j}\big|w_{j}(t)\big|\\
&+\sum_{j=1}^{n}\bigg[\gamma_{j}(t)+G_{j}a_{jj}^{+}(t)+G_{j}\sum_{i\neq j}\frac{\xi_{i}}{\xi_{j}}\big|a_{ij}(t)\big|\bigg]\xi_{j}\big|e_{j}(t)\big|
%----------------------------------------------------------------------------------------------------------------------------------------------
% \leqslant
%&-\sum_{i=1}^{n}\xi_{i}(1-\alpha)\gamma_{i}(t)\big|w_{i}(t)\big|
% +\sum_{j=1}^{n}\xi_{j}(1+\alpha)a_{jj}^{+}(t)G_{j}\big|w_{j}(t)\big|\\
%&+\sum_{j=1}^{n}\sum_{i\neq j}\xi_{i}(1+\alpha)\big|a_{ij}(t)\big|G_{j}\big|w_{j}(t)\big|
%----------------------------------------------------------------------------------------------------------------------------------------------
\end{align*}
which implies
\begin{align*}
\frac{{\rm d}\big\|w(t)\big\|_{1}}{{\rm d}t}
%\leqslant
%&-(1+\alpha)\sum_{j=1}^{n}\xi_{j}\big|w_{j}(t)\big|\\
%&\times\bigg[\beta\gamma_{j}(t)-G_{j}a_{jj}^{+}(t)-\sum_{i\neq j}\frac{\xi_{i}}{\xi_{j}}\big|a_{ij}(t)\big|G_{j}\bigg]\\
%---------------------------------------------------------------------------------------------------------------------------------------------
\leqslant&-\mu_{1}(t)\sum_{j=1}^{n}\xi_{j}\big|w_{j}(t)\big|+\M_{1}\sum_{j=1}^{n}\xi_{j}\big|e_{j}(t)\big|\\
\leqslant&-\mu_{1}(t)\big\|w(t)\big\|_{1}+\M_{1}\Phi(t)
%\leqslant&-\Big[\mu_{1}(t)-\alpha\nu_{1}(t)\Big]\big\|w(t)\big\|_{1},
\end{align*}
By the classical Gronwell inequality, we have
\begin{align*}
\big\|w(t)\big\|_{1}
\leqslant&~\big\|w(t_{0})\big\|_{1}{\rm e}^{-\sigma(t,t_{0})}+\M_{1}\int_{t_{0}}^{t}{\rm e}^{-\sigma(t,s)}\Phi(s)\,{\rm d}s\\
%---------------------------------------------------------------------------------------------------------------------------------------------
=&~{\rm e}^{-\sigma(t,t_{0})}\bigg[\big\|w(t_{0})\big\|_{1}+\M_{1}\int_{t_{0}}^{t}{\rm e}^{\sigma(s,t_{0})}\Phi(s)\,{\rm d}s\bigg]
\end{align*}
with
\begin{align*}
\sigma(t,s)=\int_{s}^{t}\mu_{1}(\tau)\,{\rm d}\tau
\end{align*}
for $s\in[t_{k},t]$, $t\in[t_{k},t_{k+1})$. By using the L'Hospital rule, it follows
%\begin{align*}
% &\lim_{t\to+\infty}\int_{t_{0}}^{t}{\rm e}^{-\sigma(t,s)}\Phi(s)\,{\rm d}s\\=&
%  \lim_{t\to+\infty}\frac{1}{{\rm e}^{\sigma(t,t_{0})}}\int_{t_{0}}^{t}{\rm e}^{\sigma(s,t_{0})}\Phi(s)\,{\rm d}s
%=&\lim_{t\to+\infty}\frac{\Phi(t)\,{\rm e}^{\sigma(t,t_{0})}}{\mu_{1}(t)\,{\rm e}^{\sigma(t,t_{0})}}=0
%\end{align*}
%which means
%\begin{align*}
%\lim_{t\to+\infty}\big\|w(t)\big\|_{1}\leqslant
%\lim_{t\to+\infty}\M_{1}\int_{t_{0}}^{t}{\rm e}^{-\sigma(t,s)}\Phi(s)\,{\rm d}s=0.
%\lim_{t\to+\infty}\frac{\M_{1}}{{\rm e}^{\sigma(t,t_{0})}}\int_{t_{0}}^{t}{\rm e}^{\sigma(s,t_{0})}\Phi(s)\,{\rm d}s=0.
%\end{align*}
\begin{align*}
 \lim_{t\to+\infty}\big\|w(t)\big\|_{1}
&\leqslant\lim_{t\to+\infty}\frac{\M_{1}}{{\rm e}^{\sigma(t,0)}}\int_{0}^{t}{\rm e}^{\sigma(s,0)}\Phi(s)\,{\rm d}s\\
&=\lim_{t\to+\infty}\M_{1}\frac{\Phi(t)\,{\rm e}^{\sigma(t,0)}}{\mu_{1}(t)\,{\rm e}^{\sigma(t,0)}}\\
&\leqslant\lim_{t\to+\infty}\frac{\M_{1}}{\varepsilon_{c}}\Phi(t)\\
&=0,
\end{align*}
where $t_{0}=0$, which implies that $\|w(t)\|_{1}$ converges to $0$ by the sampling time sequence $\{t_{k}\}_{k=0}^{+\infty}$.
Therefore, the system \eqref{Centralized} reaches out-synchronization and this completes the proof of Theorem \ref{State:Centralized}.
\end{proof}

\begin{proposition}
Let $\Phi(t)$ be a positive decreasing continuous function on $[0,+\infty)$ with $\Phi(0)>0$. Set $t_{k+1}$ as the triggering time point such that
\begin{align*}
t_{k+1}=\max_{\tau\geqslant t_{k}}\Big\{\tau:\big\|e(t)\big\|_{2}\leqslant\Phi(t),~\forall~t\in[t_{k},\tau)\Big\}
\end{align*}
for all $k=\zerotoinfty$. If $\mu_{2}(t)\geqslant\varepsilon_{c}$ for some $\varepsilon_{c}>0$ and
\begin{align*}
\lim_{t\to+\infty}\Phi(t)=0,
\end{align*}
then the system \eqref{Centralized} reaches out-synchronization.
\end{proposition}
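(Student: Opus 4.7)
The plan is to mirror the proof of Theorem \ref{State:Centralized}, replacing the $l_1$ Lyapunov function by its $l_2$ counterpart
\begin{align*}
V(t)=\|w(t)\|_{2,\xi}^{2}=\sum_{i=1}^{n}\xi_{i}w_{i}(t)^{2},
\end{align*}
and tracking the state-measurement error $e_{i}(t)=w_{i}(t_{k})-w_{i}(t)$ in the $l_{2,\xi}$ norm. First I would differentiate $V$ along \eqref{Discrete:Centralized} and split the right-hand side by writing $w_{i}(t_{k})=w_{i}(t)+e_{i}(t)$ and $h_{j}(w_{j}(t_{k}))=h_{j}(w_{j}(t))+[h_{j}(w_{j}(t_{k}))-h_{j}(w_{j}(t))]$, producing a "synchronized" term (as if the system were continuous) plus two correction terms driven by $e(t)$.

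Next I would bound the synchronized term using the Lipschitz/monotonicity condition \eqref{UpperBound} on $g_{j}$ together with AM--GM applied to cross products $|a_{ij}(t)||w_{i}(t)||w_{j}(t)|$, which is precisely what produces the symmetrized coefficient appearing in the definition of $\mu_{2,j}(\xi,t)$ (the factor $\tfrac{1}{2}$ in front of the off-diagonal sum is the signature of this step). Collecting terms should yield
\begin{align*}
\frac{{\rm d}V(t)}{{\rm d}t}\leqslant -2\mu_{2}(t)V(t)+C\,\|e(t)\|_{2,\xi}\,\|w(t)\|_{2,\xi},
\end{align*}
where $\mu_{2}(t)=\min_{j}\mu_{2,j}(\xi,t)$ and $C$ is a constant depending on the bounds $\M$ and $\N_{2}$. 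Using the triggering rule $\|e(t)\|_{2}\leqslant\Phi(t)$ together with the elementary inequality $2ab\leqslant a^{2}+b^{2}$, this can be reorganized into a linear differential inequality in $\|w(t)\|_{2,\xi}$ of the form
\begin{align*}
\frac{{\rm d}\|w(t)\|_{2,\xi}}{{\rm d}t}\leqslant -\mu_{2}(t)\|w(t)\|_{2,\xi}+\M_{2}\Phi(t).
\end{align*}

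Then I would invoke the classical Gronwall inequality to obtain
\begin{align*}
\|w(t)\|_{2,\xi}\leqslant{\rm e}^{-\sigma(t,0)}\Big[\|w(0)\|_{2,\xi}+\M_{2}\int_{0}^{t}{\rm e}^{\sigma(s,0)}\Phi(s)\,{\rm d}s\Big],
\end{align*}
with $\sigma(t,s)=\int_{s}^{t}\mu_{2}(\tau)\,{\rm d}\tau\geqslant\varepsilon_{c}(t-s)$, and finish by L'Hospital's rule as in Theorem \ref{State:Centralized}: since $\mu_{2}(t)\geqslant\varepsilon_{c}>0$ and $\Phi(t)\to 0$, the ratio $\int_{0}^{t}{\rm e}^{\sigma(s,0)}\Phi(s)\,{\rm d}s\big/{\rm e}^{\sigma(t,0)}$ tends to $\Phi(t)/\mu_{2}(t)\to 0$. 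The only delicate step is the AM--GM splitting of the cross terms so that the resulting coefficient is exactly $\mu_{2,j}(\xi,t)$ — the bookkeeping of the weights $\xi_{i}/\xi_{j}$ and $\xi_{j}/\xi_{i}$ must match the (slightly index-ambiguous) definition in Definition \ref{mu:nu}; everything else is a direct adaptation of the $l_{1}$ argument. Well-definedness of the sampling sequence $\{t_{k}\}$ and Zeno-freeness follow from $\Phi(0)>0$ and continuity, as in the $l_{1}$ case.
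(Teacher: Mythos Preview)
Your proposal is correct and matches the paper's intended approach: the paper omits the proof of this proposition, stating only that it is analogous to that of Theorem~\ref{State:Centralized} carried out in the $l_{2}$ norm, and your plan --- differentiate $\|w\|_{2,\xi}^{2}$, split the cross terms by AM--GM to produce the symmetrized coefficient $\mu_{2,j}(\xi,t)$, then apply Gronwall and L'Hospital exactly as in the $l_{1}$ case --- is precisely that adaptation. The only cosmetic remark is that the passage from the inequality for $V=\|w\|_{2,\xi}^{2}$ to the linear inequality for $\|w\|_{2,\xi}$ does not require $2ab\leqslant a^{2}+b^{2}$: simply dividing $\dot V\leqslant -2\mu_{2}(t)V+C\|e\|_{2,\xi}\|w\|_{2,\xi}$ through by $2\|w\|_{2,\xi}$ already yields the desired first-order form.
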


\begin{proposition}
Let $\Phi(t)$ be a positive decreasing continuous function on $[0,+\infty)$ with $\Phi(0)>0$. Set $t_{k+1}$ as the triggering time point such that
\begin{align*}
t_{k+1}=\max_{\tau\geqslant t_{k}}\Big\{\tau:\big\|e(t)\big\|_{\infty}\leqslant\Phi(t),~\forall~t\in[t_{k},\tau)\Big\}
\end{align*}
for all $k=\zerotoinfty$. If $\mu_{\infty}(t)\geqslant\varepsilon_{c}$ for some $\varepsilon_{c}>0$ and
\begin{align*}
\lim_{t\to+\infty}\Phi(t)=0,
\end{align*}
then the system \eqref{Centralized} reaches out-synchronization.
\end{proposition}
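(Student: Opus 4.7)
The plan is to mirror the proof of Theorem \ref{State:Centralized} almost line by line, with two adjustments: (a) replace the weighted $l_{1}$ norm by the weighted infinity norm $\|x\|_{\infty,\xi}=\max_{i}\xi_{i}^{-1}|x_{i}|$, and (b) replace the ordinary time derivative by the upper right Dini derivative $D^{+}$, since $\|\cdot\|_{\infty,\xi}$ is only piecewise smooth. Assuming the result \eqref{Discrete:Centralized} for the error dynamics, I would first write, at points where $\|w(t)\|_{\infty,\xi}>0$, the active index set $I(t)=\{i:\xi_{i}^{-1}|w_{i}(t)|=\|w(t)\|_{\infty,\xi}\}$ and invoke the standard Danskin-type identity
\begin{align*}
D^{+}\|w(t)\|_{\infty,\xi}=\max_{i\in I(t)}\xi_{i}^{-1}\sign(w_{i}(t))\dot{w}_{i}(t).
\end{align*}

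For any such $i$, I would substitute $w_{i}(t_{k})=w_{i}(t)+e_{i}(t)$ and $h_{j}(w_{j}(t_{k}))=h_{j}(w_{j}(t))+[h_{j}(w_{j}(t_{k}))-h_{j}(w_{j}(t))]$ into \eqref{Discrete:Centralized}, then apply the Lipschitz bound \eqref{UpperBound} exactly as in the displayed chain of inequalities for the $l_{1}$ proof. The key step is that since $i\in I(t)$, one has $\xi_{j}^{-1}|w_{j}(t)|\le\xi_{i}^{-1}|w_{i}(t)|$ for every $j$, which permits factoring $\|w(t)\|_{\infty,\xi}$ out of the cross terms and produces exactly the coefficient $\mu_{\infty,i}(\xi,t)$ from Definition \ref{mu:nu}. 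Bounding the $e$-terms uniformly by a finite constant $\M_{\infty}$ (existence guaranteed by the boundedness remark following Definition \ref{mu:nu}) and then taking the maximum over $i\in I(t)$ yields
\begin{align*}
D^{+}\|w(t)\|_{\infty,\xi}\le -\mu_{\infty}(t)\|w(t)\|_{\infty,\xi}+\M_{\infty}\|e(t)\|_{\infty,\xi},
\end{align*}
and the triggering rule $\|e(t)\|_{\infty}\le\Phi(t)$ then replaces the error term by $\M_{\infty}\Phi(t)$.

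With the scalar differential inequality in hand, the Grönwall comparison lemma for Dini derivatives gives
\begin{align*}
\|w(t)\|_{\infty,\xi}\le e^{-\sigma(t,0)}\|w(0)\|_{\infty,\xi}+\M_{\infty}\int_{0}^{t}e^{-\sigma(t,s)}\Phi(s)\,{\rm d}s,
\end{align*}
where $\sigma(t,s)=\int_{s}^{t}\mu_{\infty}(\tau)\,{\rm d}\tau\ge\varepsilon_{c}(t-s)$. The first term decays exponentially. For the second term I would apply L'Hospital's rule exactly as in the $l_{1}$ proof to obtain
\begin{align*}
\lim_{t\to+\infty}\frac{\int_{0}^{t}e^{\sigma(s,0)}\Phi(s)\,{\rm d}s}{e^{\sigma(t,0)}}=\lim_{t\to+\infty}\frac{\Phi(t)}{\mu_{\infty}(t)}\le\frac{1}{\varepsilon_{c}}\lim_{t\to+\infty}\Phi(t)=0,
\end{align*}
forcing $\|w(t)\|_{\infty,\xi}\to 0$ and hence out-synchronization.

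The main obstacle is the rigorous handling of the Dini derivative where the active set $I(t)$ changes discontinuously or where $w$ crosses zero; this requires a brief argument using that $I(t)$ is upper semicontinuous and that $\|w\|_{\infty,\xi}$ is locally Lipschitz, after which the one-sided derivative computation above is valid almost everywhere and integrates to the Grönwall estimate. Apart from this technicality the rest is essentially a transcription of the Theorem \ref{State:Centralized} argument.
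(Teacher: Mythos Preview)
Your proposal is correct and is precisely the adaptation the paper has in mind: the paper does not spell out a separate proof for this proposition but explicitly states that it follows the argument of Theorem \ref{State:Centralized} with the $l_{1}$ norm replaced by the $l_{\infty,\xi}$ norm. Your use of the upper Dini derivative and the active-index-set identity is the standard way to make the $l_{\infty}$ computation rigorous, and the resulting scalar inequality, Gr\"onwall step, and L'Hospital argument are exactly those of the $l_{1}$ proof.
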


\subsection{State-dependent push-based decentralized data-sampling}
For the system \eqref{Discrete:Push}, we define the state measurement error vector $e(t)=[e_{1}(t),\cdots,e_{n}(t)]^{\top}$ as
\begin{align*}
e_{i}(t)=w_{i}(t_{k}^{i})-w_{i}(t)
\end{align*}
where $t\in[t_{k}^{i},t_{k+1}^{i})$, $i=\oneton$ and $k=\zerotoinfty$. The push-based decentralized updating rule is given as follow.

\begin{theorem}\label{State:Push}
Let $\Psi(t)=[\Psi_{1}(t),\cdots,\Psi_{n}(t)]^{\top}$ be a vector of positive decreasing continuous functions with $\Psi(0)>0$, that is, $\Psi_{i}(t)$ is a positive decreasing continuous function on $[0,+\infty)$ and $\Psi_{i}(0)>0$ for $i=\oneton$. Set $t_{k+1}^{i}$ as the triggering time point such that
\begin{align}\label{State:Push:L1}
t_{k+1}^{i}
=\max_{\tau\geqslant t_{k}^{i}}\Big\{\tau:\big|e_{i}(t)\big|\leqslant\Psi_{i}(t),~\forall~t\in[t_{k}^{i},\tau)\Big\}
\end{align}
for $i=\oneton$ and all $k=\zerotoinfty$. If $\mu_{1}(t)\geqslant\varepsilon_{d}$ for some $\varepsilon_{d}>0$ and
\begin{align*}
\lim_{t\to+\infty}\Psi_{i}(t)=0
\end{align*}
for $i=\oneton$, then the system \eqref{Push} reaches out-synchronization.
\end{theorem}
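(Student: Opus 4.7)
The plan is to mirror the Lyapunov argument used in the proof of Theorem \ref{State:Centralized}, adapted to the push-based decentralized setting in which each neuron $i$ has its own sampling sequence $\{t_k^i\}$ and its own error bound $\Psi_i(t)$. For any $t\ge 0$ let $k_i(t)=\max\{k:t_k^i\le t\}$, so that $e_i(t)=w_i(t^i_{k_i(t)})-w_i(t)$ satisfies $|e_i(t)|\le\Psi_i(t)$ by the triggering rule \eqref{State:Push:L1}.

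First I would compute the upper right Dini derivative of the weighted $l_1$ Lyapunov function $V(t)=\sum_{i=1}^n\xi_i|w_i(t)|$ along \eqref{Discrete:Push}, obtaining
\begin{align*}
\frac{d^+V(t)}{dt}=\sum_{i=1}^n\xi_i\,\sign(w_i(t))\bigg[-\gamma_i(t)w_i(t^i_{k_i(t)})+\sum_{j=1}^n a_{ij}(t)h_j(w_j(t^j_{k_j(t)}))\bigg].
\end{align*}
Next I would add and subtract $\gamma_i(t)w_i(t)$ and $a_{ij}(t)h_j(w_j(t))$ inside the bracket, so each sampled value is expressed as its current value plus a residual involving $e_i$ or $e_j$, exactly as in Theorem \ref{State:Centralized}. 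The Lipschitz bound \eqref{UpperBound} gives $|h_j(w_j(t^j_{k_j(t)}))-h_j(w_j(t))|\le G_j|e_j(t)|$ and $a_{ii}(t)\le a_{ii}^+(t)$ on the negative-definite part, yielding after a relabeling of the double sum
\begin{align*}
\frac{d^+V(t)}{dt}\le -\mu_1(t)V(t)+\sum_{j=1}^n\xi_j\bigg[\gamma_j(t)+G_j a_{jj}^+(t)+G_j\sum_{i\ne j}\frac{\xi_i}{\xi_j}|a_{ij}(t)|\bigg]|e_j(t)|.
\end{align*}

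Since the coefficient of $|e_j(t)|$ is uniformly bounded by some constant $\M_1$ (by boundedness of $\gamma,a$), the triggering rule forces $|e_j(t)|\le\Psi_j(t)$, and the assumption gives $\mu_1(t)\ge\varepsilon_d$, the classical Gronwall inequality yields
\begin{align*}
V(t)\le V(0)\,e^{-\varepsilon_d t}+\M_1\int_0^t e^{-\varepsilon_d(t-s)}\sum_{j=1}^n\xi_j\Psi_j(s)\,ds.
\end{align*}
Applying L'Hospital's rule to the convolution term exactly as in the proof of Theorem \ref{State:Centralized}, and using $\Psi_j(t)\to 0$ for every $j=\oneton$, gives $V(t)\to 0$, which is the claimed out-synchronization.

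The main obstacle I anticipate is the bookkeeping in the second step: since each $w_j$ inside $h_j(w_j(t^j_{k_j(t)}))$ is sampled at its own time $t^j_{k_j(t)}$ but sits inside the $i$-indexed sign $\sign(w_i(t))$, one must relabel the double sum so that the residual terms regroup into a single sum over $j$ weighted by $|e_j(t)|$, and thereby get pointwise dominated by $\Psi_j(t)$. Once this decomposition is in place, the remainder is a direct transcription of the centralized argument, and the Zeno-freeness follows from the same positive lower bound on inter-event times as in the structure-dependent case.
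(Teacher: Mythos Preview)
Your proposal is correct and follows essentially the same route as the paper's proof: differentiate the weighted $l_1$ Lyapunov function, split each sampled term into its current value plus the error $e_j(t)$, relabel the double sum to collect coefficients of $|e_j(t)|$, bound those coefficients by $\M_1$ and the errors by $\Psi_j(t)$ via the triggering rule, then apply Gronwall and L'Hospital. The only cosmetic difference is that the paper keeps the integrated rate $\sigma(t,s)=\int_s^t\mu_1(\tau)\,d\tau$ in the Gronwall step before invoking $\mu_1\ge\varepsilon_d$, whereas you substitute $\varepsilon_d$ immediately; and the Zeno exclusion is handled in a separate theorem (via a contradiction argument using continuity and positivity of $\Psi_i$) rather than by quoting the structure-dependent bound.
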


\begin{proof}
Consider the $l_{1}$-norm of the state $w_{i}(t)~(i=\oneton)$ and $\xi_{i}>0~(i=\oneton)$
\begin{align*}
 &\frac{{\rm d}\big\|w(t)\big\|_{1}}{{\rm d}t}\\
=&\sum_{i=1}^{n}\xi_{i}\sign\big(w_{i}(t)\big)\frac{{\rm d}w_{i}(t)}{{\rm d}t}\\
=&\sum_{i=1}^{n}\xi_{i}\sign\big(w_{i}(t)\big)\bigg[-\gamma_{i}(t)w_{i}(t_{k}^{i})+\sum_{j=1}^{n}a_{ij}(t)h_{j}\big(w_{j}(t_{k}^{j})\big)\bigg]\\
%----------------------------------------------------------------------------------------------------------------------------------------------
=&\sum_{i=1}^{n}\xi_{i}\sign\big(w_{i}(t)\big)\bigg[-\gamma_{i}(t)w_{i}(t)+\sum_{j=1}^{n}a_{ij}(t)h_{j}\big(w_{j}(t)\big)\bigg]\\
 &+\sum_{i=1}^{n}\xi_{i}\sign\big(w_{i}(t)\big)\bigg[-\gamma_{i}(t)\Big(w_{i}(t_{k}^{i})-w_{i}(t)\Big)\bigg]\\
 &+\sum_{i=1}^{n}\xi_{i}\sign\big(w_{i}(t)\big)\sum_{j=1}^{n}a_{ij}(t)\bigg[h_{j}\big(w_{j}(t_{k}^{j})\big)-h_{j}\big(w_{j}(t)\big)\bigg].
%----------------------------------------------------------------------------------------------------------------------------------------------
\end{align*}
By the inequality \eqref{UpperBound} and the triggering rule \eqref{State:Push:L1}, it holds
\begin{align*}
&~\frac{{\rm d}\big\|w(t)\big\|_{1}}{{\rm d}t}\\
  \leqslant
&-\sum_{i=1}^{n}\xi_{i}\gamma_{i}(t)\big|w_{i}(t)\big|
 +\sum_{i=1}^{n}\xi_{i}\gamma_{i}(t)\big|e_{i}(t)\big|\\
&+\sum_{i=1}^{n}\xi_{i}a_{ii}^{+}(t)G_{i}\big|w_{i}(t)\big|
 +\sum_{i=1}^{n}\xi_{i}a_{ii}^{+}(t)G_{i}\big|e_{i}(t)\big|\\
&+\sum_{i\neq j}\sum_{j=1}^{n}\xi_{i}\big|a_{ij}(t)\big|G_{j}\big|w_{j}(t)\big|
 +\sum_{i\neq j}\sum_{j=1}^{n}\xi_{i}\big|a_{ij}(t)\big|G_{j}\big|e_{j}(t)\big|\\
%----------------------------------------------------------------------------------------------------------------------------------------------
 \leqslant
&-\sum_{j=1}^{n}\bigg[\gamma_{j}(t)-G_{j}a_{jj}^{+}(t)-G_{j}\sum_{i\neq j}\frac{\xi_{i}}{\xi_{j}}\big|a_{ij}(t)\big|\bigg]\xi_{j}\big|w_{j}(t)\big|\\
&+\sum_{j=1}^{n}\bigg[\gamma_{j}(t)+G_{j}a_{jj}^{+}(t)+G_{j}\sum_{i\neq j}\frac{\xi_{i}}{\xi_{j}}\big|a_{ij}(t)\big|\bigg]\xi_{j}\big|e_{j}(t)\big|
%----------------------------------------------------------------------------------------------------------------------------------------------
\end{align*}
which implies
\begin{align*}
\frac{{\rm d}\big\|w(t)\big\|_{1}}{{\rm d}t}
%---------------------------------------------------------------------------------------------------------------------------------------------
\leqslant&-\mu_{1}(t)\sum_{j=1}^{n}\xi_{j}\big|w_{j}(t)\big|+\M_{1}\sum_{j=1}^{n}\xi_{j}\big|e_{j}(t)\big|\\
\leqslant&-\mu_{1}(t)\big\|w(t)\big\|_{1}+\M_{1}\sum_{j=1}^{n}\xi_{j}\Psi_{j}(t)
%---------------------------------------------------------------------------------------------------------------------------------------------
\end{align*}
By the classical Gronwell inequality, we have
\begin{align*}
\big\|w(t)\big\|_{1}
\leqslant&
~\big\|w(t_{0}^{i})\big\|_{1}{\rm e}^{-\sigma(t,t_{0}^{i})}
+\M_{1}\int_{t_{0}^{i}}^{t}{\rm e}^{-\sigma(t,s)}\big\|\Psi(s)\big\|_{1}{\rm d}s\\
%---------------------------------------------------------------------------------------------------------------------------------------------
=&~{\rm e}^{-\sigma(t,t_{0}^{i})}\bigg[\big\|w(t_{0}^{i})\big\|_{1}
  +\M_{1}\int_{t_{0}^{i}}^{t}{\rm e}^{\sigma(s,t_{0}^{i})}\big\|\Psi(s)\big\|_{1}{\rm d}s\bigg]
\end{align*}
with
\begin{align*}
\sigma(t,s)=\int_{s}^{t}\mu_{1}(\tau)\,{\rm d}\tau
\end{align*}
for $s\in[t_{k}^{i},t]$ and $t\in[t_{k}^{i},t_{k+1}^{i})$. By using the L'Hospital rule, it follows
%\begin{align*}
%  \lim_{t\to+\infty}{\rm e}^{-\sigma(t,t_{0}^{i})}\int_{t_{0}^{i}}^{t}{\rm e}^{\sigma(s,t_{0}^{i})}\big\|\Psi(s)\big\|_{1}{\rm d}s
%  \lim_{t\to+\infty}\frac{1}{{\rm e}^{\sigma(t,t_{0}^{i})}}\int_{t_{0}^{i}}^{t}{\rm e}^{\sigma(s,t_{0}^{i})}\big\|\Psi(s)\big\|_{1}{\rm d}s
%=&\lim_{t\to+\infty}\frac{\big\|\Psi(t)\big\|_{1}{\rm e}^{\sigma(t,t_{0}^{i})}}{\mu_{1}(t)\,{\rm e}^{\sigma(t,t_{0}^{i})}}\\
%\leqslant&\lim_{t\to+\infty}\frac{\big\|\Psi(t)\big\|_{1}}{\m_{1}}\\
%=&~0.
%\end{align*}
%which means
%\begin{align*}
%\lim_{t\to+\infty}\big\|w(t)\big\|_{1}\leqslant
%\lim_{t\to+\infty}\frac{\M_{1}}{{\rm e}^{\sigma(t,t_{0}^{i})}}\int_{t_{0}^{i}}^{t}{\rm e}^{\sigma(s,t_{0}^{i})}\big\|\Psi(s)\big\|_{1}{\rm d}s=0.
%\lim_{t\to+\infty}\M_{1}\int_{t_{0}^{i}}^{t}{\rm e}^{-\sigma(t,s)}\big\|\Psi(s)\big\|_{1}{\rm d}s.
%\end{align*}
\begin{align*}
\lim_{t\to+\infty}\big\|w(t)\big\|_{1}
&\leqslant\lim_{t\to+\infty}\frac{\M_{1}}{{\rm e}^{\sigma(t,t_{0}^{i})}}\int_{t_{0}^{i}}^{t}{\rm e}^{\sigma(s,t_{0}^{i})}\big\|\Psi(s)\big\|_{1}{\rm d}s\\
&=\lim_{t\to+\infty}\M_{1}\frac{\big\|\Psi(t)\big\|_{1}{\rm e}^{\sigma(t,t_{0}^{i})}}{\mu_{1}(t)\,{\rm e}^{\sigma(t,t_{0}^{i})}}\\
&\leqslant\lim_{t\to+\infty}\frac{\M_{1}}{\varepsilon_{d}}\big\|\Psi(t)\big\|_{1}\\
&=0,
\end{align*}
where $t_{0}^{i}=0$ for all $i=\oneton$, which means that $\|w(t)\|_{1}$ converges to $0$ by the controlling time sequence $\{t_{k}^{i}\}_{k=0}^{+\infty}~(i=\oneton)$.
Hence, the system \eqref{Push} achieves out-synchronization and Theorem \ref{State:Push} is proved.
\end{proof}

\begin{proposition}
Let $\Psi(t)=[\Psi_{1}(t),\cdots,\Psi_{n}(t)]^{\top}$ be a vector of positive continuous decreasing functions on $[0,+\infty)$ and $\Psi(0)>0$.
Set $t_{k+1}^{i}$ as the triggering time points such that
\begin{align*}
t_{k+1}^{i}=\max_{\tau\geqslant t_{k}^{i}}\Big\{\tau:\big|e_{i}(t)\big|\leqslant\Psi_{i}(t),~\forall~t\in[t_{k}^{i},\tau)\Big\}
\end{align*}
for $i=\oneton$ and all $k=\zerotoinfty$. If $\mu_{2}(t)\geqslant\varepsilon_{d}$ for some $\varepsilon_{d}>0$ and
\begin{align*}
\lim_{t\to+\infty}\Psi_{i}(t)=0
\end{align*}
for $i=\oneton$, then the system \eqref{Push} reaches out-synchronization.
\end{proposition}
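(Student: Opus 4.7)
The plan is to mirror the proof of Theorem~\ref{State:Push}, replacing the weighted $l_1$-norm with the weighted $l_2$-norm. First, I would differentiate the squared norm $\|w(t)\|_{2,\xi}^2 = \sum_{i=1}^n \xi_i w_i(t)^2$ along the trajectories of \eqref{Discrete:Push}, giving
\begin{align*}
\frac{\mathrm{d}}{\mathrm{d}t}\|w(t)\|_{2,\xi}^2 = 2\sum_{i=1}^n \xi_i w_i(t)\bigg[-\gamma_i(t) w_i(t_k^i) + \sum_{j=1}^n a_{ij}(t) h_j\big(w_j(t_k^j)\big)\bigg].
\end{align*}
Then, as in the earlier proof, I would substitute $w_i(t_k^i) = w_i(t) + e_i(t)$ and $h_j(w_j(t_k^j)) = h_j(w_j(t)) + [h_j(w_j(t_k^j)) - h_j(w_j(t))]$, splitting the right-hand side into a \emph{main part} depending only on $w(t)$ and an \emph{error part} depending on $e(t)$.

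For the main part, I would bound $w_i(t) h_j(w_j(t))$ by $G_j |w_i(t)||w_j(t)|$ using \eqref{UpperBound}, then apply the weighted AM--GM inequality $2|w_i w_j| \le (\xi_j/\xi_i) w_i^2 + (\xi_i/\xi_j) w_j^2$ to symmetrize each off-diagonal pair $(a_{ij}, a_{ji})$. Swapping the summation indices $i$ and $j$ reassembles exactly the expression $\tfrac{1}{2}\big[G_j|a_{ij}(t)| + G_i (\xi_j/\xi_i)|a_{ji}(t)|\big]$ that appears in $\mu_{2,j}(\xi,t)$, while the diagonal contributions produce $-G_i(a_{ii}(t))^{+}$ in the usual way. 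Collecting terms yields $-2\mu_2(t)\|w(t)\|_{2,\xi}^2$ (with $\mu_2(t)=\min_j \mu_{2,j}(\xi,t)\ge \varepsilon_d$).

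For the error part, another application of Young's inequality $2|w_i e_j| \le \delta w_i^2 + \delta^{-1} e_j^2$ with $\delta$ chosen small enough to absorb the $w^2$-contribution into half of the decay term gives
\begin{align*}
\frac{\mathrm{d}}{\mathrm{d}t}\|w(t)\|_{2,\xi}^2 \le -\mu_2(t)\|w(t)\|_{2,\xi}^2 + \M_2 \sum_{j=1}^n \xi_j e_j(t)^2
\end{align*}
for some constant $\M_2>0$, after which the triggering rule $|e_i(t)| \le \Psi_i(t)$ converts the right-hand sum into $\M_2\,\|\Psi(t)\|_{2,\xi}^2$. From here the argument is identical to that of Theorem~\ref{State:Push}: the classical Gronwall inequality yields an explicit upper bound on $\|w(t)\|_{2,\xi}^2$, and the L'H\^opital rule combined with $\mu_2(t)\ge\varepsilon_d>0$ and $\lim_{t\to+\infty}\Psi_i(t)=0$ forces $\|w(t)\|_{2,\xi}\to 0$, establishing out-synchronization.

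The main obstacle is the bookkeeping in the cross-term symmetrization. The definition of $\mu_{2,j}(\xi,t)$ has a definite ratio $\xi_j/\xi_i$ paired with $|a_{ji}(t)|$ but no such ratio in front of $|a_{ij}(t)|$; this asymmetry can only emerge by carefully pairing the cross terms indexed by $(i,j)$ and $(j,i)$ with the correct weights \emph{before} taking the minimum over $j$. Once that identification is in hand, the use of Young's inequality to dominate the $e$-into-$w$ cross terms is routine, and the asymptotic argument follows the model already established in the $l_1$ case.
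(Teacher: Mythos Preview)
Your proposal is correct and is exactly the approach the paper intends: the paper gives no explicit proof for this proposition, stating only that the $l_2$ and $l_\infty$ results are proved analogously to the corresponding $l_1$ theorem (here Theorem~\ref{State:Push}). Your plan---differentiate $\|w\|_{2,\xi}^2$, split into a main part and an error part, symmetrize the off-diagonal cross terms via weighted AM--GM to recover $\mu_{2,j}(\xi,t)$, control the $w$--$e$ cross terms by Young's inequality, then apply Gronwall and L'H\^opital---is the standard $l_2$ adaptation and matches what the paper would do.
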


\begin{proposition}
Let $\Psi(t)=[\Psi_{1}(t),\cdots,\Psi_{n}(t)]^{\top}$ be a vector of positive continuous decreasing functions on $[0,+\infty)$ and $\Psi(0)>0$.
Set $t_{k+1}^{i}$ as the triggering time points such that
\begin{align*}
t_{k+1}^{i}=\max_{\tau\geqslant t_{k}^{i}}\Big\{\tau:\big|e_{i}(t)\big|\leqslant\Psi_{i}(t),~\forall~t\in[t_{k}^{i},\tau)\Big\}
\end{align*}
for $i=\oneton$ and all $k=\zerotoinfty$. If $\mu_{\infty}(t)\geqslant\varepsilon_{d}$ for some $\varepsilon_{d}>0$ and
\begin{align*}
\lim_{t\to+\infty}\Psi_{i}(t)=0
\end{align*}
for $i=\oneton$, then the system \eqref{Push} reaches out-synchronization.
\end{proposition}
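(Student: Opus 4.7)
My plan is to transcribe the proof of Theorem~\ref{State:Push} almost verbatim, replacing the weighted $l_1$-norm of Definition~\ref{norm}(1) by the weighted $l_\infty$-norm of Definition~\ref{norm}(3). Since $\|\cdot\|_\infty$ is not classically differentiable, my first step is to replace $\tfrac{{\rm d}}{{\rm d}t}\|w(t)\|_\infty$ by the upper right Dini derivative $D^{+}\|w(t)\|_\infty$: at any $t\ge 0$ I pick an active index $i^{\ast}(t)$ with $\|w(t)\|_\infty=\xi_{i^{\ast}}^{-1}|w_{i^{\ast}}(t)|$, and use the standard inequality $D^{+}\|w(t)\|_\infty\le \xi_{i^{\ast}}^{-1}\sign(w_{i^{\ast}}(t))\dot w_{i^{\ast}}(t)$, which reduces everything to estimating the right-hand side of \eqref{Discrete:Push} at the single index $i^{\ast}$.

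Next I would expand $\dot w_{i^{\ast}}(t)$ as in the proofs of Theorem~\ref{Structure:Centralized} and Theorem~\ref{State:Push}, decomposing $w_i(t_k^i)=w_i(t)+e_i(t)$ and $h_j(w_j(t_k^j))=h_j(w_j(t))+[h_j(w_j(t_k^j))-h_j(w_j(t))]$, and splitting the resulting bound into a ``nominal'' continuous-time part and a ``sampling-error'' part. Using Definition~\ref{mu:nu} and the elementary estimate $|a_{i^{\ast}j}(t)h_j(w_j(t))|\le G_j|a_{i^{\ast}j}(t)|\,\xi_j\|w(t)\|_\infty$ for $j\neq i^{\ast}$, the nominal part contracts at rate $\mu_{\infty,i^{\ast}}(\xi,t)\ge \mu_\infty(t)\ge \varepsilon_d$. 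The sampling-error part is controlled by the triggering rule $|e_i(t)|\le \Psi_i(t)$ combined with the Lipschitz bound $|h_j(w_j(t_k^j))-h_j(w_j(t))|\le G_j|e_j(t)|\le G_j\Psi_j(t)$.

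Putting these together I obtain a scalar differential inequality of the form
\begin{align*}
D^{+}\|w(t)\|_\infty \;\le\; -\mu_\infty(t)\,\|w(t)\|_\infty + \N_\infty'\,\max_{1\le j\le n}\xi_j^{-1}\Psi_j(t),
\end{align*}
where $\N_\infty'$ is a uniform bound on the ``error-amplification'' row sums (the $l_\infty$-analog of the $\M_1$ used in the proof of Theorem~\ref{State:Push}). Applying Gronwall's inequality with integrating factor $\exp[\sigma(t,s)]$, $\sigma(t,s)=\int_s^t\mu_\infty(\tau)\,{\rm d}\tau\ge \varepsilon_d(t-s)$, and then the L'Hospital computation used at the end of Theorem~\ref{State:Push}, gives
\begin{align*}
\lim_{t\to+\infty}\|w(t)\|_\infty \;\le\; \frac{\N_\infty'}{\varepsilon_d}\lim_{t\to+\infty}\max_{1\le j\le n}\xi_j^{-1}\Psi_j(t) \;=\; 0,
\end{align*}
which yields the required out-synchronization.

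The main obstacle I expect is not the algebraic bookkeeping but the justification of the Dini-derivative step when several indices tie for the maximum and the active index $i^{\ast}(t)$ jumps. I would handle this by noting that off a set of measure zero the active index is unique so that $D^{+}$ coincides with $\tfrac{{\rm d}}{{\rm d}t}$, or equivalently by invoking the standard comparison lemma for maxima of finitely many absolutely continuous functions; once this technicality is disposed of, the remainder of the proof is an essentially word-for-word port of the $l_1$-argument of Theorem~\ref{State:Push}, so I do not expect any further difficulty.
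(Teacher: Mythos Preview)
Your proposal is correct and is exactly the approach the paper intends: the paper omits the proof of this proposition, noting only that it is the $l_\infty$-analog of the $l_1$ argument given for Theorem~\ref{State:Push}, and your translation---using the Dini derivative at an active index $i^\ast$ to obtain the scalar inequality $D^{+}\|w(t)\|_\infty \le -\mu_\infty(t)\|w(t)\|_\infty + C\max_j\xi_j^{-1}\Psi_j(t)$ and then the Gronwall/L'Hospital step---is precisely that analog. The only point you add beyond the paper is the explicit discussion of the Dini-derivative technicality when the maximizing index is non-unique, which the paper glosses over entirely.
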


\begin{remark}
The preliminary condition for systems \eqref{Centralized} and \eqref{Push} to achieve out-synchronization is that the existing duration of the solution in the Cauchy problem of systems \eqref{Discrete:Centralized} and \eqref{Discrete:Push} should be $[0,+\infty)$ (equivalently $\lim_{k\to\infty}t_{k}=+\infty$ and $\lim_{k\to\infty}t_{k}^{i}=+\infty$ for all $i=\oneton$). The verification of this condition will be given in Section \ref{Zeno}.
\end{remark}

\begin{remark}
To loosen the assumption that $\mu_{m}(t)\geqslant\varepsilon_{d}~~(m=1,2\text{~or~}\infty)$ for some $\varepsilon_{d}>0$ and $\lim_{t\to+\infty}\Psi_{i}(t)=0$, the function $\Psi_{i}(t)$ can be designed as follow
\begin{align*}
\delta_{k}(t)\,e^{-\beta_{i}(t-t_{k}^{i})}
\end{align*}
where
\begin{align*}
\delta_{k}(t)=\inf_{i=\oneton}\left\{\frac{\alpha\big\|w(t)\big\|_{1}}{\sum\limits_{i=1}^{n}e^{-\beta_{i}(t-t_{k}^{i})}}:t-t_{k}^{i}\in[0,T]\right\}
\end{align*}
and
\begin{align*}
\delta_{k}(t)=\inf_{i=\oneton}\left\{\frac{\alpha\big\|w(t)\big\|_{2}}{\sqrt{\sum\limits_{i=1}^{n}\xi_{i}\,e^{-2\beta_{i}(t-t_{k}^{i})}}}:t-t_{k}^{i}\in[0,T]\right\}
\end{align*}
and
\begin{align*}
\delta_{k}(t)=\inf_{i=\oneton}\left\{\frac{\alpha\,\xi_{i^{*}}\big\|w(t)\big\|_{\infty}}{\max\limits_{i=\oneton}\big\{e^{-\beta_{i}(t-t_{k}^{i})}\big\}}:t-t_{k}^{i}\in[0,T]\right\}.
\end{align*}
which depend on the global state information $\|w(t)\|_{1}$, $\|w(t)\|_{2}$ and $\|w(t)\|_{\infty}$.
In these design, the out-synchronization of system \eqref{Push} can be proved and the Zeno behavior can also be excluded without other restricted conditions.
\end{remark}

\section{Exclusion of Zeno behavior\label{Zeno}}

In this section, we are to prove the absence of Zeno behavior. To this aim, we will find a common positive lower bound for the inter-event time $t_{k+1}-t_{k}$ or $t_{k+1}^{i}-t_{k}^{i}$, for all the neurons $i=\oneton$ and $k=\zerotoinfty$.

\begin{theorem}\label{Zeno:AlternateTriggering}
Under either the centralized data-sampling rule in Theorem \ref{State:Centralized} or the push-based decentralized data-sampling rule in Theorem \ref{State:Push}, the inter-event interval of every neuron is strictly positive and has a common positive lower bound. Moreover, the Zeno behavior is excluded.
\end{theorem}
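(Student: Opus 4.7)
I treat Theorems \ref{State:Centralized} and \ref{State:Push} in parallel. At every triggering instant $t_k$ the measurement error satisfies $e(t_k)=0$ (and similarly $e_i(t_k^i)=0$ in the decentralized case), while the threshold $\Phi(t_k)>0$ (respectively $\Psi_i(t_k^i)>0$). Continuity of the error in $t$ and of the threshold then immediately gives $t_{k+1}>t_k$ (respectively $t_{k+1}^i>t_k^i$), so the positivity claim is free. The substantive content is a common strictly positive lower bound that rules out Zeno behavior. My approach is to bound the error growth linearly in the local time increment at a rate that is uniform in $k$ and $i$, and then compare with the strictly positive threshold.

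\textbf{Uniform linear error growth.} From the closed-loop equation \eqref{Discrete:Centralized} together with the Lipschitz estimate $|h_j(t_k)|\le G_j|w_j(t_k)|$ and the uniform boundedness of $\gamma_i$, $a_{ij}$ and $G_j$, I get $|\dot w_i(t)|\le C\|w(t_k)\|_1$ on each interval $[t_k,t_{k+1})$, for a constant $C$ depending only on those bounds. To turn this into something uniform in $k$ I need an a priori upper envelope on $\|w(t)\|_1$; this is supplied by the differential inequality
$$\frac{d\|w\|_1}{dt}\le-\mu_1(t)\|w\|_1+M_1\Phi(t)$$
that was derived inside the proof of Theorem \ref{State:Centralized}. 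Combined with $\mu_1(t)\ge\varepsilon_c$ and $\Phi(s)\le\Phi(0)$, a Gronwall step yields $\|w(t)\|_1\le\|w(0)\|_1+M_1\Phi(0)/\varepsilon_c=:W^{\star}$. Hence $|\dot w_i|\le CW^{\star}$ and, since $e_i(t_k)=0$, integration gives $\|e(t)\|_1\le L(t-t_k)$ with $L:=CW^{\star}\sum_i\xi_i$. The triggering equality $\|e(t_{k+1})\|_1=\Phi(t_{k+1})$ therefore forces
$$t_{k+1}-t_k\ge\frac{\Phi(t_{k+1})}{L}>0.$$

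\textbf{Zeno exclusion and main obstacle.} Suppose for contradiction that $t_k\to T^{\star}<\infty$. Since $\Phi$ is positive and continuous, $\Phi(T^{\star})>0$, so the previous display yields $t_{k+1}-t_k\ge\Phi(T^{\star})/L>0$ for all sufficiently large $k$, contradicting $t_{k+1}-t_k\to 0$; hence $t_k\to\infty$, which is exactly the exclusion of Zeno behavior, and the common lower bound on any compact time window is furnished by the positive minimum of $\Phi$ on that window divided by $L$. The push-based decentralized setting of Theorem \ref{State:Push} is handled neuron by neuron by replacing $\Phi$ with $\Psi_i$, $t_k$ with $t_k^i$, and using the componentwise bound $|e_i(t)|\le CW^{\star}(t-t_k^i)$. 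The main technical obstacle is the a priori uniform state bound $W^{\star}$: the proof of Theorem \ref{State:Centralized} invokes the Gronwall inequality on $[0,+\infty)$, so there is a mild circularity with the very assertion being proved. It is resolved by observing that on any compact subinterval preceding the hypothetical accumulation time $T^{\star}$, the right-hand side of \eqref{Discrete:Centralized} is an affine function of $w(t_k)$ with uniformly bounded coefficients, so Gronwall still delivers $W^{\star}$ up to $T^{\star}$, which is precisely what the contradiction step needs.
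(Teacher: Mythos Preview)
Your argument is correct and, for the decentralized case, essentially identical to the paper's: bound $\|\dot w\|$ uniformly by a constant times the initial data, integrate to get $|e_i(t)|\le L(t-t_k^i)$, and then observe that an accumulation $t_k^i\to T^\star<\infty$ would force $\Psi_i(T^\star)=0$, contradicting positivity. The only cosmetic difference is that the paper takes the uniform state bound directly as $\|w(0)\|_1$ (relying on the monotonicity established in the convergence proofs), whereas you obtain $W^\star$ from the Gronwall inequality; your remark about the apparent circularity and its resolution on $[0,T^\star]$ is apt and handles this cleanly.

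For the centralized case the paper does something genuinely different: it differentiates the \emph{ratio} $\|e(t)\|_1/\|w(t)\|_1$, bounds it by $\M_1(1+\|e\|_1/\|w\|_1)^2$, and compares with the scalar Riccati equation $\dot\phi=\M_1(1+\phi)^2$ to extract a lower bound $\eta_c$ that is uniform in $k$. Your approach instead reuses the linear-growth-plus-contradiction argument from the decentralized case. The advantage of your route is uniformity of method and transparency; the paper's ratio argument, by contrast, aims at a $k$-independent lower bound, but its link to the stated absolute-threshold rule $\|e\|_1\le\Phi(t)$ is left implicit, and indeed since $\Phi(t)\to 0$ no $k$-uniform lower bound can hold in general under that rule. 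So your interpretation---a positive lower bound common to all neurons on any compact time window, sufficient to exclude Zeno---is the right level of generality, and your proof delivers it.
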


\begin{proof}
(1) For the centralized rule, consider the following derivative.
\begin{align*}
 &~\frac{{\rm d}}{{\rm d}t}\frac{\big\|e(t)\big\|_{1}}{\big\|w(t)\big\|_{1}}
  =\frac{\frac{{\rm d}}{{\rm d}t}\big\|e(t)\big\|_{1}}{\big\|w(t)\big\|_{1}}
  -\frac{\big\|e(t)\big\|_{1}}{\big\|w(t)\big\|_{1}}
   \frac{\frac{{\rm d}}{{\rm d}t}\big\|w(t)\big\|_{1}}{\big\|w(t)\big\|_{1}}\\
%----------------------------------------------------------------------------------------------------
=&-\frac{\sum\limits_{i=1}^{n}\xi_{i}\sign\big(e_{i}(t)\big)\dot{w}_{i}(t)}{\big\|w(t)\big\|_{1}}
  -\frac{\sum\limits_{i=1}^{n}\xi_{i}\sign\big(w_{i}(t)\big)\dot{w}_{i}(t)}{\big\|w(t)\big\|_{1}}
   \frac{\big\|e(t)\big\|_{1}}{\big\|w(t)\big\|_{1}}\\
\leqslant&
  ~\Bigg[\mu_{1}(t)+\nu_{1}(t)\frac{\big\|e(t)\big\|_{1}}{\big\|w(t)\big\|_{1}}\Bigg]
   \Bigg[1+\frac{\big\|e(t)\big\|_{1}}{\big\|w(t)\big\|_{1}}\Bigg]\\
\leqslant&
  ~\Bigg[\m_{1}+\M_{1}\frac{\big\|e(t)\big\|_{1}}{\big\|w(t)\big\|_{1}}\Bigg]
   \Bigg[1+\frac{\big\|e(t)\big\|_{1}}{\big\|w(t)\big\|_{1}}\Bigg]\\
\leqslant&
  ~\M_{1}\Bigg[1+\frac{\big\|e(t)\big\|_{1}}{\big\|w(t)\big\|_{1}}\Bigg]^{2},
\end{align*}
where $\m_{1}=\sup_{t\in[0,+\infty)}\{\mu_{1}(t)\}$ and $\m_{1}\leqslant\M_{1}$. Via comparison principle, we have
\begin{align*}
\frac{\big\|e(t)\big\|_{1}}{\big\|w(t)\big\|_{1}}\leqslant\phi(t)
\end{align*}
where $\phi(t)$ is the solution of the following differential equation
\begin{align*}
\begin{cases}
\dfrac{{\rm d}\phi(t)}{{\rm d}t}=\M_{1}\Big[\phi(t)+1\Big]^{2}\\[7pt]
\phi(t_{0})=\phi_{0}
\end{cases}.
\end{align*}
Hence the inter-event time $t_{k+1}-t_{k}$ has a common lower bound $\eta_{c}$ which follows
\begin{align*}
\eta_{c}=\dfrac{\phi_{0}}{\m_{1}(1+\phi_{0})}.
\end{align*}
For the lower bound $\eta_{c}$ is uniform for all the neurons, the next triggering time point $t_{k+1}$ satisfies $t_{k+1}\geqslant t_{k}+\eta_{c}$ for all $i=\oneton$ and $k=\zerotoinfty$. Therefore we can assert that there is no Zeno behavior for all the neurons.

(2) For the push-based decentralized rule, let us consider the following derivative of the state measurement error for any neuron $v_{i}~(i=\oneton)$.
\begin{align*}
 \big\|e(t)\big\|_{1}\leqslant
 &\int_{t_{k}^{i}}^{t}\big\|\dot{w}(s)\big\|_{1}{\rm d}s\\
=&\int_{t_{k}^{i}}^{t}\sum_{j=1}^{n}
  \xi_{j}\bigg|-\gamma_{j}(s)w_{j}(t_{k}^{j})+\sum_{l=1}^{n}a_{jl}(s)h_{l}\big(w_{l}(t_{k}^{l})\big)\bigg|{\rm d}s\\
%--------------------------------------------------------------------------------------------------------------------------
  \leqslant
 &\int_{t_{k}^{i}}^{t}\sum_{l=1}^{n}
  \bigg[\gamma_{l}(t)+\sum_{j=1}^{n}\frac{\xi_{j}}{\xi_{l}}\big|a_{jl}(t)\big|G_{l}\bigg]
  \xi_{l}\big|w_{l}(t_{k}^{l})\big|\,{\rm d}s\\
%--------------------------------------------------------------------------------------------------------------------------
  \leqslant
 &\sum_{l=1}^{n}\xi_{l}\big|w_{l}(t_{k}^{l})\big|\int_{t_{k}^{i}}^{t}\M_{1}\,{\rm d}s\\
  \leqslant
 &~\M_{1}\big\|w(0)\big\|_{1}(t-t_{k}^{i}),
\end{align*}
where $\|w(0)\|_{1}$ is a given positive constant. Based on the triggering rule \eqref{State:Push:L1}, the event will not trigger until $|e_{i}(t)|=\Psi_{i}(t)$ at time point $t=t_{k+1}^{i}$. Hence, it holds
\begin{align*}
\Psi_{i}(t_{k+1}^{i})=\M_{1}\big\|w(0)\big\|_{1}(t_{k+1}^{i}-t_{k}^{i}).
\end{align*}

%{\color{red}[[$\le$ instead of $=$ here!!]]}

Given any positive time point $T>0$, suppose that there is at least one neuron $i$ exhibiting the Zeno behavior on the finite time period $[0,T]\subset[0,+\infty)$, that is, there exist infinite number of triggering on $[0,T]$. Then it satisfies
\begin{align*}
\lim_{k\to+\infty}t_{k}^{i}=t^{\bm*}\in[0,T].
\end{align*}
Since $\Psi_{i}(t)$ is a continuous function on $[t_{0},+\infty)$, we have
\begin{align*}
 \Psi_{i}(t^{\bm*})
&=\lim_{k\to+\infty}\Psi_{i}(t_{k+1}^{i})\\
&=\lim_{k\to+\infty}\M_{1}\big\|w(0)\big\|_{1}(t_{k+1}^{i}-t_{k}^{i})\\
&=0,
\end{align*}
which means there exists a time point $t^{\bm*}\in[0,T]$ such that $\Psi_{i}(t^{\bm*})=0$.
This contradicts that $\Psi_{i}(t)~(i=\oneton)$ is a positive function on $[0,+\infty)$.
Therefore, for the arbitrariness of $T>0$, we can assert that there is no Zeno behavior for all the neurons on $[0,+\infty)$. That is to say, the next inter-event interval has a common positive lower bound for each neuron $i=\oneton$, which satisfies
\begin{align*}
\eta_{d}=\min_{i=\oneton}\Big\{\eta_{d}^{i}:\Psi_{i}(\eta_{d}^{i}+t_{k}^{i})=\M_{1}\big\|w(0)\big\|_{1}\eta_{d}^{i}\Big\}
\end{align*}
This completes the proof.
\end{proof}
\begin{remark}
The proof for Theorem \ref{Zeno:AlternateTriggering} is given under $l_{1}$ norm. By similar approach, one can also prove the theorem via $l_{2}$ and $l_{\infty}$ norm.
\end{remark}

After proving the exclusion of Zeno behavior, we are at the stage to conclude that $\lim_{k\to+\infty}t_{k}=+\infty$ and $\lim_{k\to+\infty}t_{k}^{i}=+\infty$ for all $i=\oneton$. This implies the following result.
\begin{theorem}
Under the data-sampling rule described in Theorem \ref{State:Centralized} and Theorem \ref{State:Push}, existing duration of the solution in the Cauchy problem of systems \eqref{Discrete:Centralized} and \eqref{Discrete:Push} are all $[0,+\infty)$, equivalently
\begin{align*}
\lim_{k\to\infty}t_{k}=+\infty
\end{align*}
and
\begin{align*}
\lim_{k\to\infty}t_{k}^{i}=+\infty
\end{align*}
for all $i=\oneton$.
\end{theorem}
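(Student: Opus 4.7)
The plan is to obtain this result as an almost immediate consequence of Theorem \ref{Zeno:AlternateTriggering}, which has already furnished common positive lower bounds $\eta_c$ and $\eta_d$ on the inter-event intervals. The argument splits into the two sampling schemes; in both cases the payoff of the no-Zeno property is that only finitely many triggerings can fit into any bounded time window, hence the triggering sequences must diverge to $+\infty$.

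For the centralized rule of Theorem \ref{State:Centralized}, I would simply chain the uniform lower bound. Since Theorem \ref{Zeno:AlternateTriggering} gives $t_{k+1}-t_{k}\geq\eta_{c}>0$ for every $k=\zerotoinfty$, telescoping yields
\begin{align*}
t_{k} \;=\; t_{0}+\sum_{j=0}^{k-1}(t_{j+1}-t_{j}) \;\geq\; k\,\eta_{c},
\end{align*}
so $\lim_{k\to\infty}t_{k}=+\infty$. Consequently the solution of \eqref{Discrete:Centralized}, which is piecewise defined on each interval $[t_{k},t_{k+1}]$ by an ordinary differential equation with bounded right-hand side, extends uniquely over all of $[0,+\infty)$.

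For the push-based decentralized rule of Theorem \ref{State:Push}, I would argue by contradiction. Fix an arbitrary neuron index $i\in\{\oneton\}$ and suppose $t^{\ast}:=\lim_{k\to\infty}t_{k}^{i}<+\infty$; the sequence $\{t_{k}^{i}\}$ is strictly increasing by the no-Zeno property, and would then accumulate at the finite point $t^{\ast}$. This produces infinitely many triggerings of neuron $i$ inside the finite interval $[0,t^{\ast}]$, which is precisely the Zeno behaviour excluded in the proof of Theorem \ref{Zeno:AlternateTriggering}: such an accumulation forces $\Psi_{i}(t^{\ast})=\lim_{k\to\infty}\M_{1}\|w(0)\|_{1}(t_{k+1}^{i}-t_{k}^{i})=0$, contradicting the standing hypothesis that $\Psi_{i}$ is a positive function on $[0,+\infty)$. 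Hence $\lim_{k\to\infty}t_{k}^{i}=+\infty$ for every $i=\oneton$, and as in the centralized case the Cauchy solution of \eqref{Discrete:Push} exists on all of $[0,+\infty)$.

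The main conceptual obstacle is merely the asymmetry between the two schemes: the centralized lower bound $\eta_{c}$ is uniform in $k$, so a direct telescoping suffices, whereas the decentralized per-step bound $\eta_{d}^{i}$ might in principle shrink with $k$ because $\Psi_{i}$ is decreasing. This is why the decentralized case must be handled by the finite-horizon contradiction recorded inside Theorem \ref{Zeno:AlternateTriggering} rather than by a naive summation; once that contradiction is invoked, no further calculation is required.
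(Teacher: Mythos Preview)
Your proposal is correct and follows essentially the same approach as the paper, which simply invokes the lower bounds $\eta_{c}$ and $\eta_{d}$ from Theorem~\ref{Zeno:AlternateTriggering} to conclude $\lim_{k\to\infty}t_{k}=+\infty$ and $\lim_{k\to\infty}t_{k}^{i}=+\infty$. Your treatment is in fact more careful than the paper's one-line argument: you explicitly telescope in the centralized case and correctly note that the decentralized case requires the finite-horizon contradiction already embedded in the proof of Theorem~\ref{Zeno:AlternateTriggering}, a subtlety the paper glosses over by citing $\eta_{d}$ as if it were uniform in $k$.
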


\begin{proof}
In fact, from Theorem \ref{Zeno:AlternateTriggering}, one can see that for all initial values, the trajectory of systems \eqref{Discrete:Centralized} and \eqref{Discrete:Push} possess discontinuous triggerring events with two positive lower bounds $\eta_{c}$ and $\eta_{d}$ of inter-event time respectively. This implies that $\lim_{k\to\infty}t_{k}=+\infty$ or $\lim_{k\to+\infty}t_{k}^{i}=+\infty$ for all the neuron $i=\oneton$. Therefore, the solutions in the Cauchy problem of systems \eqref{Discrete:Centralized} and \eqref{Discrete:Push} all exist for the duration $[0,+\infty)$.
\end{proof}

\section{Numerical simulation}
In this section, we provide a numerical example to illustrate the theoretical results. The comparisons between the centralized and push-based decentralized rules based on both structure and state are also given. Let us consider the switching topologies $\mathcal S=\{(\Gamma_{1},A_{1},I_{1}),\cdots,(\Gamma_{6},A_{6},I_{6})\}$, where
\begin{align*}
&\Gamma_{1}=\text{\it diag}\,\big\{0.8850, ~0.9148, ~0.8530, ~0.7977, ~0.8764\big\}\\
&\Gamma_{2}=\text{\it diag}\,\big\{0.7484, ~0.9326, ~0.6340, ~0.9843, ~0.5494\big\}\\
&\Gamma_{3}=\text{\it diag}\,\big\{0.7735, ~0.7015, ~0.8535, ~0.8621, ~0.9068\big\}\\
&\Gamma_{4}=\text{\it diag}\,\big\{0.8915, ~0.7833, ~0.9057, ~0.7884, ~0.9720\big\}\\
&\Gamma_{5}=\text{\it diag}\,\big\{0.9357, ~0.7538, ~0.8944, ~0.7365, ~0.9144\big\}\\
&\Gamma_{6}=\text{\it diag}\,\big\{0.6612, ~0.9881, ~0.6391, ~0.5364, ~0.8756\big\}
\end{align*}
and
\begin{align*}
&A_{1}=\setlength{\arraycolsep}{4pt}
\begin{bmatrix}
  -1.7919 &   -0.3948 & ~~~0.2564 &   -0.3204 &   -0.0156\\
~~~0.4671 & ~~~0.2490 &   -0.7117 &   -0.1370 &   -0.0501\\
  -0.7011 & ~~~0.0369 &   -1.8727 &   -0.7410 &   -0.0184\\
  -0.1982 & ~~~0.1655 & ~~~0.8427 & ~~~0.3652 &   -0.7693\\
~~~0.6181 & ~~~0.5135 &   -0.5559 & ~~~0.0658 &   -1.9569
\end{bmatrix}\\
&A_{2}=\setlength{\arraycolsep}{4pt}
\begin{bmatrix}
~~~0.2630 &   -0.3615 & ~~~0.8626 & ~~~0.3302 & ~~~0.2694\\
~~~0.4676 &   -1.8345 &   -0.5973 &   -0.4837 &   -0.3797\\
  -0.8931 & ~~~0.0360 &   -1.7021 &   -0.1515 &   -0.8251\\
  -0.0750 &   -0.3230 & ~~~0.5239 &   -1.9542 &   -0.2013\\
  -0.1842 &   -0.0325 & ~~~0.2393 & ~~~0.3162 & ~~~0.2926
\end{bmatrix}\\
%------------------------------------
&A_{3}=\setlength{\arraycolsep}{4pt}
\begin{bmatrix}
~~~0.3798 &   -0.5099 &   -0.4776 & ~~~1.4789 & ~~~0.8120\\
  -0.4506 &   -1.7393 & ~~~0.2600 & ~~~0.4094 & ~~~0.1505\\
~~~0.3564 & ~~~0.5781 &   -1.6185 & ~~~0.2230 & ~~~0.2439\\
~~~0.1150 & ~~~0.4990 &   -0.1876 &   -1.6549 &   -0.6292\\
~~~0.2979 & ~~~0.4720 &   -0.2338 &   -0.6050 & ~~~0.8528
\end{bmatrix}\\
&A_{4}=\setlength{\arraycolsep}{4pt}
\begin{bmatrix}
  -1.7522 &   -0.0166 & ~~~0.3873 & ~~~0.0970 &   -1.1968\\
  -0.3338 &   -1.8286 & ~~~0.3803 &   -0.5127 & ~~~0.7253\\
  -0.1573 & ~~~0.4312 & ~~~0.4020 &   -0.5886 &   -0.6525\\
~~~0.0200 &   -0.7156 &   -0.6737 &   -1.0330 &   -0.5318\\
  -0.1808 & ~~~0.4284 & ~~~0.2678 &   -0.0480 &   -1.3318
\end{bmatrix}\\
%-------------------------------------
&A_{5}=\setlength{\arraycolsep}{4pt}
\begin{bmatrix}
  -1.8018 &   -0.5470 &   -0.1406 & ~~~0.2769 &   -0.8000\\
  -0.4222 & ~~~0.2530 & ~~~0.4295 & ~~~0.5383 & ~~~0.1825\\
  -0.7572 &   -0.4001 &   -1.9090 & ~~~0.6196 & ~~~0.6523\\
  -0.7861 & ~~~0.5978 & ~~~0.2121 &   -1.5166 & ~~~0.2531\\
~~~0.1823 & ~~~0.5187 &   -0.2007 & ~~~0.3803 & ~~~0.1668
\end{bmatrix}\\
&A_{6}=\setlength{\arraycolsep}{4pt}
\begin{bmatrix}
  -1.6122 &   -0.4175 &   -0.4285 & ~~~0.5557 & ~~~0.4177\\
~~~0.1750 & ~~~0.6452 & ~~~0.2641 &   -0.1387 &   -0.4541\\
~~~0.6864 &   -0.1068 &   -1.0629 &   -0.1994 & ~~~0.1796\\
~~~0.4106 & ~~~0.2553 &   -0.7769 & ~~~0.7958 & ~~~0.5536\\
~~~0.2599 &   -0.1512 & ~~~0.1097 &   -0.3196 &   -1.5582
\end{bmatrix}
\end{align*}
and
\begin{align*}
&I_{1}=\begin{bmatrix}~~~0.6353\\~~~0.5897\\~~~0.2886\\  -0.2428\\~~~0.6232\end{bmatrix}\hspace{1ex}
 I_{2}=\begin{bmatrix}~~~0.0657\\  -0.2985\\~~~0.8780\\~~~0.7519\\~~~0.1003\end{bmatrix}\hspace{1ex}
 I_{3}=\begin{bmatrix}~~~0.2450\\~~~0.1741\\  -0.5845\\  -0.3975\\  -0.0582\end{bmatrix}\\[1pt]
&I_{4}=\begin{bmatrix}  -0.5390\\~~~0.6886\\  -0.6105\\  -0.5482\\  -0.6586\end{bmatrix}\hspace{1ex}
 I_{5}=\begin{bmatrix}  -0.5447\\  -0.1286\\  -0.3778\\~~~0.8468\\  -0.1396\end{bmatrix}\hspace{1ex}
 I_{6}=\begin{bmatrix}  -0.6304\\~~~0.8098\\~~~0.9595\\  -0.1223\\  -0.7778\end{bmatrix}
\end{align*}
The activation function satisfies $g_{i}(u_{i})=1/(1+{\rm e}^{-u_{i}})$ and the switching time sequence of $(A_{i},\Gamma_{i},I_{i})$ follows a Poisson process with $\lambda=1$. In the structure-dependent rules, we set $\alpha=0.2$, $\beta_{i}=1~(i=\oneton)$, $T=500$, $\epsilon_{c}=\varepsilon_{c}=0.01$ and $\epsilon_{d}=\varepsilon_{d}=0.02$. In state-dependent rules, the function $\Phi(t)$ in centralized data-sampling is given as
\begin{align*}
\Phi(t)=\frac{8000}{\big(0.0065\,t+6.5\big)^{5}}
\end{align*}
and the functions $\Psi_{i}(t)~(i=\oneton)$ in push-based decentralized data-sampling are given as follow
\begin{align*}
&\Psi_{1}(t)=\frac{27000}{\big(0.007\,t+0.68\big)^{6}},&
&\Psi_{2}(t)=\frac{90000}{\big(0.01\,t+1.27\big)^{6}},\\
&\Psi_{3}(t)=\frac{80000}{\big(0.012\,t+1.02\big)^{6}},&
&\Psi_{4}(t)=\frac{(t+100)\,{\rm e}^{-0.01\,t-1}}{700\,\Gamma(2)},\\
&\Psi_{5}(t)=\frac{2100}{\big(0.005\,t+0.5\big)^{6}},&
\end{align*}
where $\Gamma(n)$ is a gamma function.

Figures \ref{Trajectory:CentralizedStructure}, \ref{Trajectory:PushStructure}, \ref{Trajectory:CentralizedState} and \ref{Trajectory:PushState} show that the two trajectories $u_{1}(t)$ and $v_{1}(t)$ of neuron $1$ starting from two different initial values converge to each other. Figures \ref{Error:L1}, \ref{Error:L2} and \ref{Error:LInfty} plot the logarithm of the error dynamics $\log\|u(t)-v(t)\|$ in four data-sampling rules under three norms $l_{1}$, $l_{2}$ and $l_{\infty}$, which implies that the convergence is exponential. Figure \ref{StatisticalResult} describes the statistical results of the triggering time points. We can see that the number of the triggering time points $\{t_{k}\}_{k=0}^{+\infty}$ or $\{t_{k}^{i}\}_{k=0}^{+\infty}~(\oneton)$ in structure-dependent rules is more than that in state-dependent rules, and the average number of the triggering time points $\langle t_{k}^{i}\rangle_{i}$ over the five neurons in push-based decentralized rules is more than the number of triggering time points $t_{k}$ in centralized rules.

\section{Conclusions}

In this paper, the out-synchronization dynamics of both centralized and push-based decentralized asymmetrical time-varying neural network by data-sampling are discussed.
The sufficient conditions for both sampled-data rules are proposed and proved to guarantee the global out-synchronization.
In addition, the exclusion of the Zeno behavior can be verified by proving the common lower-bounds of the time-varying time-steps.
One numerical example is provided to illustrate our theoretical results.
The results can also be extended to the case where the underlying systems involve stochastic disturbances or controlled in networked environment. \citep{Ding,Shen} may be the starting point  of our future extension on this issue.

\bibliographystyle{elsarticle-harv}
%\bibliography{Reference}

\begin{figure}[H]
\centering
\subfigure[The trajectory of neuron No.$1$ starting from two different initial values in structure-dependent centralized system.]
{\label{Trajectory:CentralizedStructure}\includegraphics[width=0.485\textwidth]{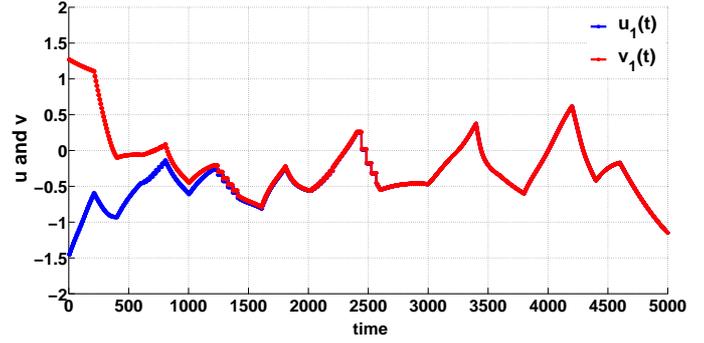}}\\
%-----------------------------------------------------------------------------------------------------------------------------
\subfigure[The trajectory of neuron No.$1$ starting from two different initial values in structure-dependent push-based decentralized system.]
{\label{Trajectory:PushStructure}\includegraphics[width=0.485\textwidth]{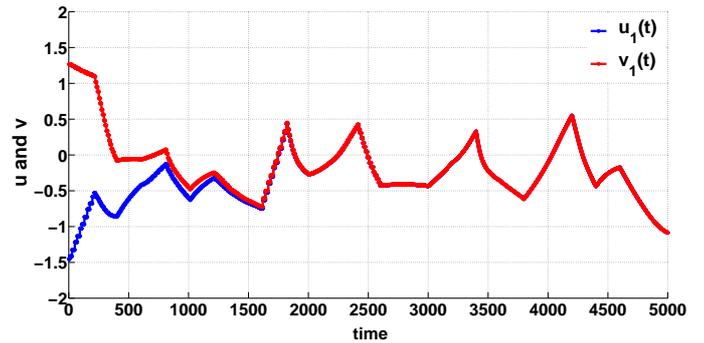}}\\
%-----------------------------------------------------------------------------------------------------------------------------
\subfigure[The trajectory of neuron No.$1$ starting from two different initial values in state-dependent centralized system.]
{\label{Trajectory:CentralizedState}\includegraphics[width=0.485\textwidth]{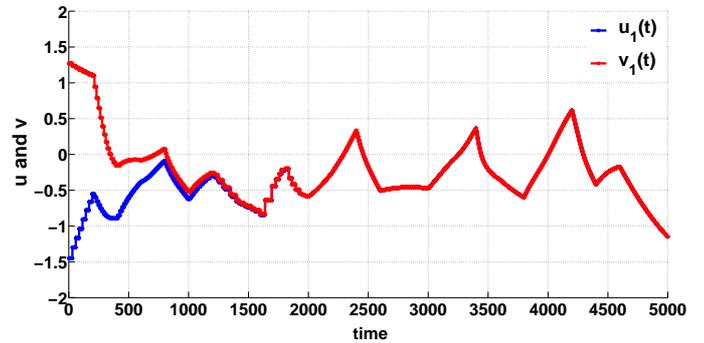}}\\
%-----------------------------------------------------------------------------------------------------------------------------
\subfigure[The trajectory of neuron No.$1$ starting from two different initial values in state-dependent push-based decentralized system.]
{\label{Trajectory:PushState}\includegraphics[width=0.485\textwidth]{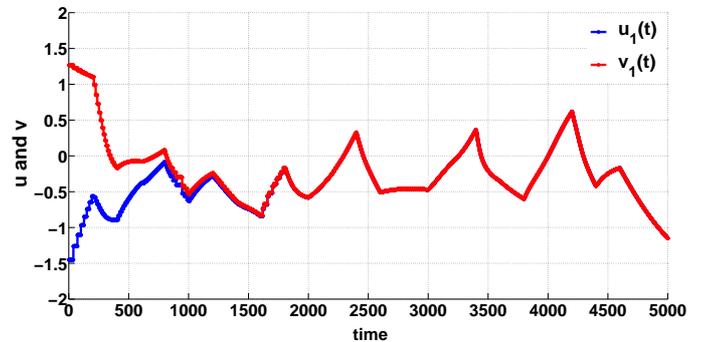}}
%-----------------------------------------------------------------------------------------------------------------------------
\caption{The figures show the two trajectories of neuron No.$1$ for two different initial values, which indicates the out-synchronization of the systems.}
\label{Trajectory}
\end{figure}

\begin{figure}[H]
\centering
\subfigure[The logarithm of the error dynamics $\log\|u(t)-v(t)\|_{1}$ in four data-sampling rules under $l_{1}$- norm.]
{\label{Error:L1}\includegraphics[width=0.49\textwidth]{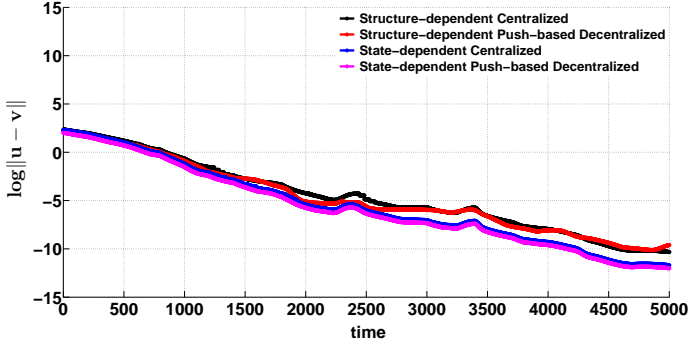}}\\
%-----------------------------------------------------------------------------------------------------------------------------
\subfigure[The logarithm of the error dynamics $\log\|u(t)-v(t)\|_{2}$ in four data-sampling rules under $l_{2}$- norm.]
{\label{Error:L2}\includegraphics[width=0.49\textwidth]{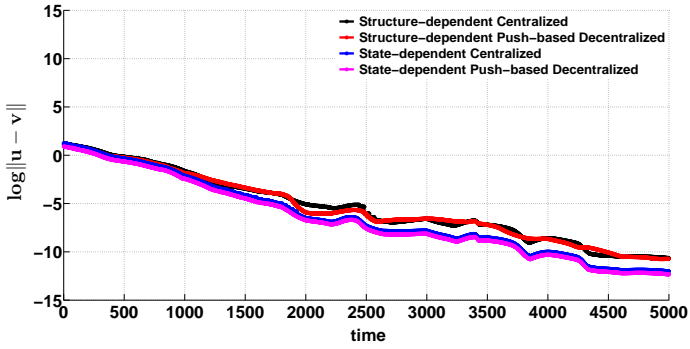}}\\
%-----------------------------------------------------------------------------------------------------------------------------
\subfigure[The logarithm of the error dynamics $\log\|u(t)-v(t)\|_{\infty}$ in four data-sampling rules under $l_{\infty}$- norm.]
{\label{Error:LInfty}\includegraphics[width=0.49\textwidth]{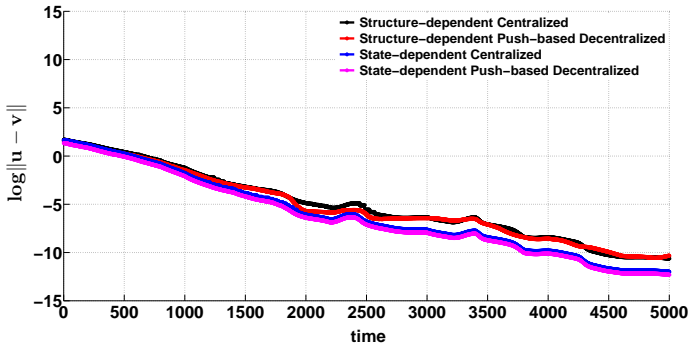}}\\
%-----------------------------------------------------------------------------------------------------------------------------
\subfigure[The statistical results in the number of the triggering time points during each time period by four different rules. The magenta lines show the maximum and minimum number of the triggering time points in five neurons.]
{\label{StatisticalResult}\includegraphics[width=0.49\textwidth]{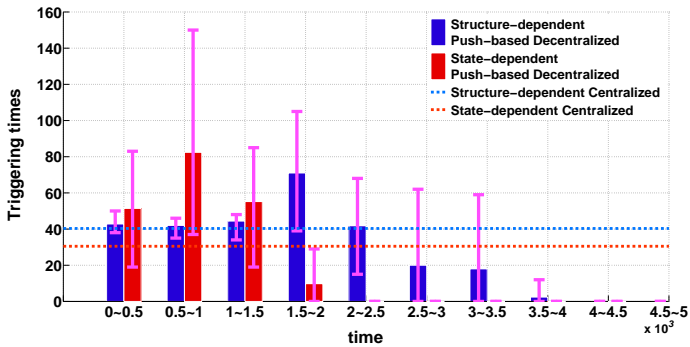}}
%-----------------------------------------------------------------------------------------------------------------------------
\caption{The figures show the logarithm of the error dynamics and the statistical results in the number of the triggering time points.}
\label{Error}
\end{figure}

\end{document}